\documentclass{article}

\usepackage{mathrsfs}
\usepackage{amssymb}
\usepackage{amsmath,amsthm}
\usepackage{enumerate}   
\usepackage{microtype}
\usepackage{graphicx}
\usepackage{subfigure}
\usepackage{todonotes}
\usepackage{booktabs} 
\usepackage{hyperref}
\usepackage{cleveref}
\usepackage{mathtools}
\usepackage{bbm}
\usepackage{csquotes}
\usepackage[skip=0pt]{caption}

\let\P\relax\newcommand{\P}{\mathbbm{P}}

\newcommand*{\N}{\mathbb{N}}
\DeclareMathOperator*{\E}{\mathbbm{E}}
\DeclareMathOperator*{\V}{\mathbbm{V}}
\DeclareMathOperator{\R}{\mathbb{R}}

\theoremstyle{plain}
\newtheorem{theorem}{Theorem}[section]
\newtheorem{lemma}[theorem]{Lemma}

\newtheorem{proposition}[theorem]{Proposition}
\newtheorem{assumption}{Assumption}
\theoremstyle{definition}
\newtheorem{definition}[theorem]{Definition}

\theoremstyle{remark}
\newtheorem{remark}[theorem]{Remark}

\usepackage[accepted]{icml2022}
\icmltitlerunning{Robust SDE-Based Variational Formulations for Solving Linear PDEs via Deep Learning}

\begin{document}

\twocolumn[
\icmltitle{Robust SDE-Based Variational Formulations\\for Solving Linear PDEs via Deep Learning}
\icmlsetsymbol{equal}{*}

\begin{icmlauthorlist}
\icmlauthor{Lorenz Richter}{equal,dida,zib,ber}
\icmlauthor{Julius Berner}{equal,vie}
\end{icmlauthorlist}

\icmlaffiliation{vie}{Faculty of Mathematics, University of Vienna, Austria}
\icmlaffiliation{ber}{Institute of Mathematics, Freie Universit\"at Berlin, Germany}
\icmlaffiliation{zib}{Zuse Institute Berlin, Germany}
\icmlaffiliation{dida}{dida Datenschmiede GmbH, Germany}
\icmlcorrespondingauthor{Julius Berner}{\href{mailto:julius.berner@univie.ac.at}{julius.berner@univie.ac.at}}
\icmlcorrespondingauthor{Lorenz Richter}{\href{mailto:lorenz.richter@fu-berlin.de}{lorenz.richter@fu-berlin.de}}

\icmlkeywords{ICML, Kolmogorov PDE, Robustness, SDE, Deep Learning, Machine Learning, Numerical Analysis}

\vskip 0.3in
]

\printAffiliationsAndNotice{\icmlEqualContribution (the author order was determined by \texttt{numpy.random.rand(1)})}

\begin{abstract}
The combination of Monte Carlo methods and deep learning has recently led to efficient algorithms for solving partial differential equations (PDEs) in high dimensions. Related learning problems are often stated as variational formulations based on associated stochastic differential equations (SDEs), which allow the minimization of corresponding losses using gradient-based optimization methods. In respective numerical implementations it is therefore crucial to rely on adequate gradient estimators that exhibit low variance in order to reach convergence accurately and swiftly. In this article, we rigorously investigate corresponding numerical aspects that appear in the context of linear Kolmogorov PDEs. In particular, we systematically compare existing deep learning approaches and provide theoretical explanations for their performances. Subsequently, we suggest novel methods that can be shown to be more robust both theoretically and numerically, leading to substantial performance improvements.
\end{abstract}

\section{Introduction}
\label{sec: introduction}

In this paper we suggest novel methods for solving high-dimensional linear Kolmogorov PDEs. In particular, those methods allow for more efficient approximations by reducing the variance of loss estimators, as illustrated by numerical experiments like the one displayed in \Cref{fig: loss std first page}. 
\begin{figure}
  \centering
  \includegraphics[width=1.0\linewidth]{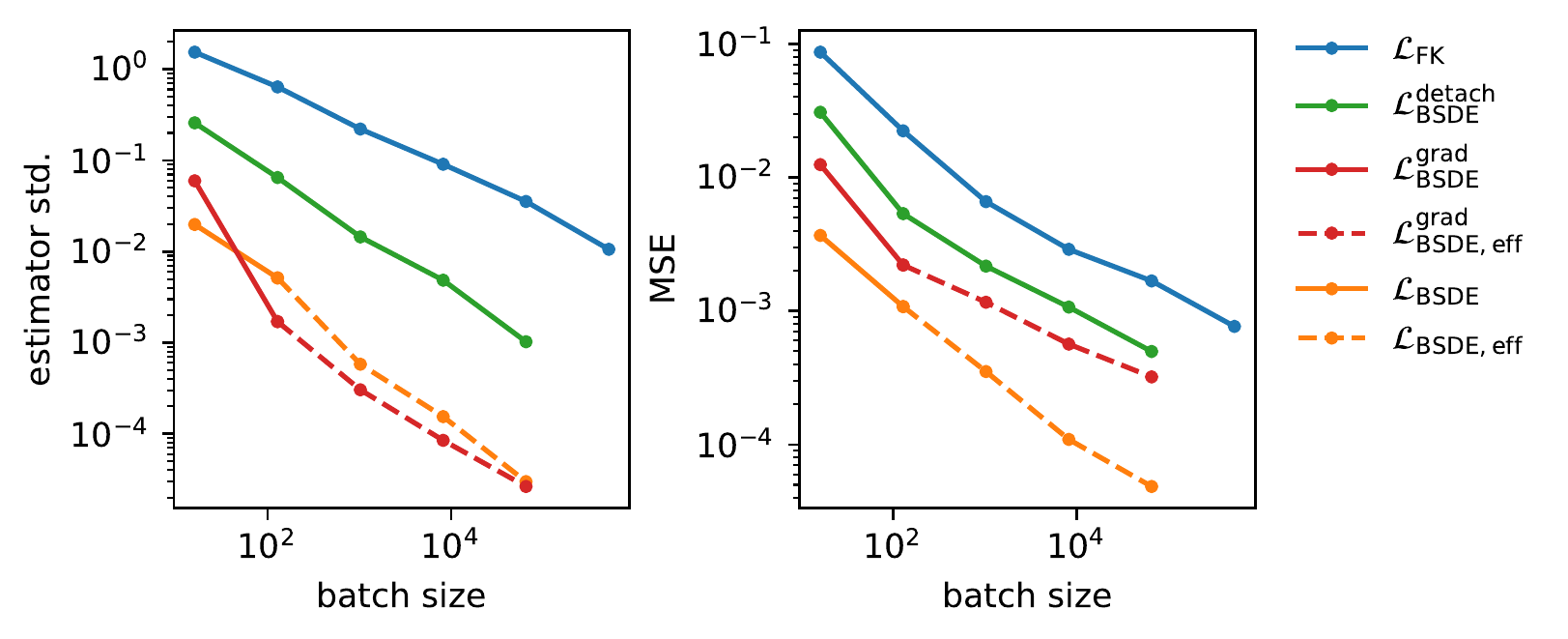}
  \vspace{-0.7cm}
  \caption{Standard deviation and performance (in terms of the mean squared error (MSE) on evaluation data) of different losses as a function of the batch size after training a neural network for $30k$ steps, where due to memory constraints different losses allow for different maximal batch size values.}
  \label{fig: loss std first page}
\end{figure}

High-dimensional PDEs are ubiquitous in applications appearing in economics, science, and engineering, however, their numerical treatment poses formidable challenges since traditional grid-based methods suffer from the curse of dimensionality. Recently, the combination of Monte Carlo methods and deep learning has led to feasible algorithms \cite{weinan2017deep,sirignano2018dgm,han2018solving, raissi2019physics,pfau2020ab,berner2021modern}, which, at least in some settings, are able to beat this curse \cite{jentzen2018proof, reisinger2019rectified, berner2020analysis}. The general idea is to rely on stochastic representations of the PDE which may be interpreted as variational formulations that can subsequently be minimized by iterative gradient decent methods \cite{nusken2021interpolating}. In respective algorithms it is therefore crucial to rely on adequate gradient estimators that exhibit low variance and eventually lead to robust numerical optimization routines, where robustness refers to the ability of coping with stochasticity in the optimization process.

In this article we focus on linear Kolmogorov PDEs for which variational formulations may be based on the celebrated Feynman-Kac formula \cite{beck2021solving}. This method is handy from an implementational point of view, however, we can demonstrate potential non-robustness issues that might deteriorate the numerical performance. To overcome these issues, we suggest alternative approaches resting on backward stochastic differential equations which lead to variance-reduced estimators and robust algorithms. Our contributions can be summarized as follows:
\begin{itemize}
    \item We provide a systematic numerical comparison of loss functions for solving linear Kolmogorov PDEs via deep learning.
    \item We can show both theoretically and numerically why certain methods are non-robust and might therefore incur numerical instabilities.
    \item We provide new loss functions and optimization methods which are provably robust and ultimately yield significant performance improvements in relevant high-dimensional numerical experiments.
\end{itemize}

\subsection{Related Work}
Solving linear Kolmogorov PDEs via the Feynman-Kac formula by means of deep learning has been suggested in \citet{beck2021solving} and further analyzed in \citet{berner2020numerically}. For an analysis of certain kinds of elliptic linear PDEs we refer to \citet{grohs2020deep}. Variational formulations of nonlinear PDEs based on backward stochastic differential equations (BSDEs) have been pioneered in \citet{weinan2017deep} and extensions that aim for approximations on entire domains have, for instance, been suggested in \citet{nusken2021interpolating}. We refer to \citet{Richter2021} for a comprehensive introduction and further perspectives, and to \citet{beck2019machine} for the treatement of fully nonlinear equations. An alternative method for the approximation of high-dimensional PDEs is termed \textit{physics-informed neural networks} (PINNs) \cite{raissi2019physics, sirignano2018dgm}, which relies on iterative residual minimizations. Note that in contrast to most SDE-based methods it relies on the explicit computation of Hessians and involves additional loss terms for the boundary values, which need to be tuned carefully.

In terms of variance reduction of Monte Carlo estimators we refer to \citet{vidales2018unbiased} for a control variate attempt via deep learning. Robustness and variance analyses of certain loss functions related to PDE problems have been conducted in \citet{nusken2021solving}. The related observation that including a certain It\^{o} integral term into the objective function can lead to variance-reduced estimators has been made in \citet{zhou2021actor}. Both works, however, consider different settings and focus only on Hamilton--Jacobi--Bellman (HJB) equations. Furthermore, they do not provide a detailed analysis of the interplay between robustness properties, performance, and computational feasibility, which is crucial for practical applications.

\subsection{Notation}
We assume that our random variables are defined on a suitable underlying filtered probability space $(\Omega,\mathcal{F},(\mathcal{F}_t)_t, \P)$ satisfying the usual conditions~\cite{klenke2013probability}. We denote the expectation of a random variable $A\colon\Omega \to \R$ by $\E[A]\coloneqq \int A \, \mathrm{d}\P$. Given another random variable $B\colon\Omega\to \R^k$ with $k\in\N$, we also consider the conditional expectation $
\E[A|B]$ of $A$ given $B$ and the conditional expectation $b \mapsto \E[A|B=b]$ of $A$ given $B=b$, see, e.g.,~\citet{ash2000probability} for the definitions. We further define the variance of $A$ by $\V[A]\coloneqq \E\left[(A-\E[A])^2\right]$ and the conditional variance of $A$ given $B$ by $\V[A|B]\coloneqq \E\left[(A-\E[A|B])^2| B\right]$. Throughout the article, $W$ is a standard $d$-dimensional Brownian motion which is adapted to the underlying filtration and we consider stochastic integrals w.r.t.\@ $W$ as, e.g., defined in~\citet{gall2016}. Further remarks on the notation can be found in \Cref{app: notation}.

\section{SDE-Based Variational Formulations of Linear PDEs}
\label{sec: main section}

In this paper we consider linear \emph{Kolmogorov partial differential equations}\footnote{PDEs of this type are often also referred to as \emph{Kolmogorov (backward) equations} and their adjoints are known as \emph{Fokker-Planck equations} or \emph{Kolmogorov forward equations.}} of the type
\begin{subequations}
\label{eq:PDE}
\begin{align}
    (\partial_t + L)V(x, t) &= 0, & (x, t) \in {\R}^d \times [0, T), \\
    V(x, T) &= g(x), & x \in {\R}^d,
\end{align}
\end{subequations}
where $g\in C(\R^d, \R)$ is a given terminal condition and
\begin{equation}
\label{eq:diff_operator}
        L \coloneqq \frac{1}{2} \sum_{i,j=1}^d (\sigma \sigma^\top)_{ij}(x,t) \partial_{x_i} \partial_{x_j} + \sum_{i=1}^d b_i(x,t) \partial_{x_i}
\end{equation}
is a differential operator based on given coefficient functions $\sigma \in C(\R^d \times [0, T], \R^{d\times d})$ and $b \in C(\R^d \times [0, T], \R^d)$. For simplicity, let us assume the existence of a unique strong solution $V \in C^{2,1}(\R^d\times [0, T], \R)$ to the PDE in~\eqref{eq:PDE} and refer to \Cref{as: conditions on b sigma g} in \Cref{app: proofs} for further technical details. At the same time, let us note that most of our results also hold for the more general concept of viscosity solutions that are continuous but not necessarily differentiable and require weaker conditions on the functions $b, \sigma$, and $g$, see~\citet{hairer2015loss}.
Furthermore, we can also consider more general linear PDEs as well as elliptic and parabolic problems on bounded domains, as elaborated on in \Cref{app: feynman-kac}. 

This covers a broad spectrum of PDEs, for which accurate and reliable solvers are of great importance to practitioners. For instance, such PDEs frequently appear in physics for the modelling of heat flow and diffusion processes \cite{widder1976heat, pascucci2005kolmogorov}. Moreover, the PDE in~\eqref{eq:PDE} includes the Black-Scholes equation and extensions thereof, used for pricing financial derivative instruments~\citep{black1973pricing, pironneau09, EKSTROM2010498}. Finally, we want to emphasize appearances of such PDEs in the field of machine learning, e.g., in the context of reinforcement learning~\cite{theodorou2010generalized, kiumarsi2017optimal} and diffusion-based generative modeling~\cite{huang2021variational}.

In order to design machine learning algorithms aiming to approximate solutions to \eqref{eq:PDE}, we consider loss functions
\begin{equation}
    \mathcal{L}: \mathcal{U} \to \R_{\ge 0},
\end{equation}
which shall be minimal if and only if $u \in \mathcal{U}$ fulfills $\eqref{eq:PDE}$, thereby offering a variational formulation that allows for iterative minimization strategies. Here, $\mathcal{U}$ is an appropriate function class holding sufficient approximation capacity and containing, for instance, a class of certain neural networks. Moreover, we implicitly assume that $\mathcal{U}$ is chosen such that $V\in \mathcal{U}$ and, for our theoretical results, we require the functions in $\mathcal{U}$ to be sufficiently smooth.

A first obvious choice for the loss could be the squared difference between the approximating function $u$ and the solution $V$, i.e.
\begin{equation} \label{eq:eval_loss}
    \mathcal{L}_\mathrm{Eval}(u) \coloneqq \E\left[\left(V(\xi,\tau) - u(\xi,\tau) \right)^2 \right],
\end{equation}
where $\xi$ and $\tau$ are suitable\footnote{In practice, one often chooses random variables that are uniformly distributed on a given hypercube. For our theoretical results, we assume that $\xi$ and $\tau$ are $\mathcal{F}_0$-measurable and exhibit sufficiently many bounded moments. This ensures that $\xi$ and $\tau$ are sufficiently concentrated and independent of the Brownian motion $W$.} random variables with values in $\R^d$ and $[0,T]$, respectively. At first glance, this loss seems intractable since the solution $V$ is just the quantity we are after and is therefore not known. However, let us recall the \emph{Feynman-Kac formula}, which connects the deterministic function $V$ to a stochastic process by stating that almost surely it holds that
\begin{equation}
\label{eq:fk}
    V(\xi,\tau) = \E\left[ g(X_T) | (\xi, \tau)\right],
\end{equation}
where $X$ is the solution to the SDE
\begin{equation}
\label{eq:SDE}
    \mathrm dX_s = b(X_s, s) \,\mathrm d s + \sigma(X_s, s) \,\mathrm d W_s, \quad X_\tau=\xi,
\end{equation}
see Appendix~\ref{app: feynman-kac} for details.
This allows us to rewrite~\eqref{eq:eval_loss} as
\begin{subequations}
\label{eq:eval_loss FK}
\begin{align}
    \mathcal{L}_\mathrm{Eval}(u) &= \E\left[\big(\E\left[ g(X_T) | (\xi, \tau)\right] - u(\xi,\tau) \big)^2 \right] \\
    \label{eq:mse_mc}
    &= \E\left[\E\left[ \Delta_u | (\xi, \tau)\right]^2 \right] \\
    \label{eq:eval_loss FK control variate}
    &= \E\left[\E\left[ \Delta_u -S_u | (\xi, \tau)\right]^2 \right],
\end{align}
\end{subequations}
which now relies on the (random) quantities\footnote{For the sake of readability, we omit several implicit dependencies. Specifically, the stochastic process $X$ and the quantities $\Delta_u$ and $S_u$ depend on the random variable $(\xi,\tau)$, defining the initial time and value of the SDE, i.e., $X_\tau = \xi$. 
Also note that the stochastic integral $S_u$ and the SDE in~\eqref{eq:SDE} are driven by the same Brownian motion $W$.
}
\begin{subequations}
\begin{align}
\label{eq: definition Delta_u}
    \Delta_u &\coloneqq g(X_T) - u(\xi, \tau), \\
    \label{eq: definition S_u}
    S_u &\coloneqq \int_\tau^T \big(\sigma^\top \nabla_x u\big) (X_s, s) \cdot \mathrm dW_s.
\end{align}
\end{subequations}
The last step \eqref{eq:eval_loss FK control variate} follows from the fact that, under mild regularity assumptions, the stochastic integral $S_u$ is a martingale which has zero expectation given $(\xi,\tau)$. Adding quantities with a known expectation is a common trick for variance reduction of corresponding estimators known as \textit{control variates}, see, e.g., Section 4.4.2 in \citet{robert2004monte}. The benefit of particularly
choosing $S_u$ becomes clear with the following Lemma, which states that for $u = V$ the corresponding control variate actually yields a zero-variance estimator.

\begin{lemma}[Optimal control variate]
\label{lem: zero variance}
Let $V$ be a solution to the PDE in \eqref{eq:PDE}. For $\Delta_V$ and $S_V$ as defined in \eqref{eq: definition Delta_u} and \eqref{eq: definition S_u} it almost surely holds that 
\begin{equation}
\label{eq:ito}
    \Delta_V = S_V,
\end{equation}
which implies that
$\V\left[\Delta_V - S_V | (\xi, \tau) \right] = 0$. 
\end{lemma}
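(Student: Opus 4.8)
The plan is to reduce the identity $\Delta_V = S_V$ to a single application of Itô's formula, using the PDE to annihilate the drift term. The whole statement is essentially a reformulation of the Feynman--Kac correspondence at the level of paths rather than expectations, so I would not expect any deep conceptual difficulty beyond checking regularity.

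First I would apply Itô's formula to the process $s \mapsto V(X_s, s)$ on the (random) interval $[\tau, T]$, which is legitimate since $V \in C^{2,1}(\R^d \times [0,T], \R)$ and $X$ solves the SDE in~\eqref{eq:SDE}. Substituting the dynamics $\mathrm d X_s = b(X_s,s)\,\mathrm d s + \sigma(X_s,s)\,\mathrm d W_s$ and using that the quadratic covariation satisfies $\mathrm d\langle X^i, X^j\rangle_s = (\sigma\sigma^\top)_{ij}(X_s,s)\,\mathrm d s$, the first- and second-order terms combine precisely into the differential operator $L$ from~\eqref{eq:diff_operator}. Together with the $\partial_t V$ contribution this yields
\begin{align*}
\mathrm d V(X_s, s) &= \big((\partial_t + L)V\big)(X_s, s)\,\mathrm d s \\
&\quad + \big(\sigma^\top \nabla_x V\big)(X_s, s)\cdot \mathrm d W_s.
\end{align*}
By the PDE in~\eqref{eq:PDE} the drift term $\big((\partial_t + L)V\big)(X_s,s)$ vanishes identically along the path, so only the stochastic integral survives.

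Integrating from $\tau$ to $T$ and invoking the initial condition $X_\tau = \xi$ together with the terminal condition $V(\cdot, T) = g$ then gives
\begin{equation*}
g(X_T) - V(\xi, \tau) = \int_\tau^T \big(\sigma^\top \nabla_x V\big)(X_s, s)\cdot \mathrm d W_s,
\end{equation*}
which is exactly $\Delta_V = S_V$ by the definitions in~\eqref{eq: definition Delta_u} and~\eqref{eq: definition S_u}. The variance claim is then immediate: since $\Delta_V - S_V = 0$ holds almost surely, its conditional variance given $(\xi,\tau)$ vanishes.

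The main obstacle is purely technical rather than conceptual, and concerns making the above rigorous under the stated setup. In particular, I would need to justify Itô's formula with a \emph{random} (but $\mathcal{F}_0$-measurable and $W$-independent) starting time $\tau$ and initial value $\xi$ --- most cleanly by conditioning on $(\xi,\tau)$ and applying the formula pathwise --- and to verify that $S_V$ is a genuine (square-integrable) stochastic integral rather than a mere local martingale. Both reduce to moment and growth bounds on $\sigma^\top \nabla_x V$ and on $X$, which is precisely the role of the regularity hypotheses on $b, \sigma, g$ (and hence on $V$) collected in the appendix.
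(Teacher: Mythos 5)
Your proof is correct and follows essentially the same route as the paper's: apply It\^{o}'s formula to $s \mapsto V(X_s,s)$ on $[\tau,T]$, use the PDE to kill the drift term, and read off $\Delta_V = S_V$ from the terminal condition. Your additional remarks on handling the random initial time and the square-integrability of $S_V$ are careful technical points that the paper defers to its regularity assumptions rather than spelling out.
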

\begin{proof}
The proof is an application of It\^{o}'s lemma. Together with the observation that $X$ is an It\^{o} process given by the SDE in~\eqref{eq:SDE}, it implies that almost surely it holds that
\begin{equation*}
  V(X_T,T) - V(\xi,\tau) = \int_{\tau}^T(\partial_t + L)V(X_s,s)\,\mathrm ds + S_V,
\end{equation*}
assuming that $V \in C^{2, 1}(\R^d \times [0, T], \R)$, see, e.g., Theorem 3.3.6 in \citet{karatzas1998brownian}. Using the fact that $V$ solves the PDE in~\eqref{eq:PDE}, this implies the statement.
\end{proof}

While, in principle, the new representation in \eqref{eq:eval_loss FK} makes the loss $\mathcal{L}_\mathrm{Eval}$ computationally tractable, an immediate disadvantage are the two expectations, which, for a numerical implementation, might be costly as they would both need to rely on Monte Carlo approximations.

It turns out, however, that the conditional expectations in~\eqref{eq:mse_mc} and~\eqref{eq:eval_loss FK control variate} are in fact not needed and we can further define the two losses 
\begin{equation}
\label{eq: FK loss}
    \mathcal{L}_\mathrm{FK}(u) \coloneqq \E\left[\Delta_u^2 \right] 
\end{equation}
and
\begin{equation}
\label{eq: BSDE loss}
    \mathcal{L}_\mathrm{BSDE}(u) \coloneqq \E\left[\left(\Delta_u -  S_u \right)^2 \right].
\end{equation}
Note that now the expectations in \eqref{eq: FK loss} and \eqref{eq: BSDE loss} are to be understood with respect to the random initial time $\tau$, random initial value $\xi$, and the randomness of the Brownian motion $W$, which defines the evolution of the stochastic process $X$ on the (random) interval $[\tau, T]$.

In the remainder of this article we will investigate the above losses with respect to certain robustness properties that will turn out to imply different statistical properties and consequently lead to different optimization performances of corresponding algorithms. Let us start with the following proposition which relates the three losses to one another and therefore guarantees that $\mathcal{L}_\mathrm{FK}$ and $\mathcal{L}_\mathrm{BSDE}$ indeed constitute meaningful objectives for approximating the solution $V$ to the PDE in~\eqref{eq:PDE}.
\begin{proposition}[Minima of $\mathcal{L}_\mathrm{FK}$ and $\mathcal{L}_\mathrm{BSDE}$]
\label{prop:opt}
    For every measurable $u\colon\R^d\times [0,T]\to\R$ it holds that
    \begin{subequations}
    \begin{align}
        \mathcal{L}_\mathrm{Eval}(u) &=  \mathcal{L}_\mathrm{FK}(u) - \V\left[S_V \right] \\
        &= \mathcal{L}_\mathrm{BSDE}(u) - \V\left[S_{V - u} \right].
    \end{align}
    \end{subequations}
    This implies that the solution $V$ to the PDE in~\eqref{eq:PDE} is the unique\footnote{Up to null sets w.r.t.\@ to the image measure of $(\xi,\tau)$.} minimizer of $\mathcal{L}_\mathrm{FK}$ and $\mathcal{L}_\mathrm{BSDE}$. The minima satisfy
    \begin{enumerate}[(i)]
        \item $\min_{u} \mathcal{L}_\mathrm{FK}(u) = \mathcal{L}_\mathrm{FK}(V) = \E[\Delta_V^2] = \V[S_V]$,
        \item $\min_{u} \mathcal{L}_\mathrm{BSDE}(u) = \mathcal{L}_\mathrm{BSDE}(V) =0$.
    \end{enumerate}
\end{proposition}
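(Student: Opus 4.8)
The plan is to prove both displayed identities via orthogonal decompositions that exploit two structural facts: the Feynman--Kac relation, which gives $\E[\Delta_V \mid (\xi,\tau)] = 0$ by~\eqref{eq:fk}, and the martingale property of the stochastic integral, which gives $\E[S_w \mid (\xi,\tau)] = 0$ for any sufficiently regular $w$. The algebraic key is the splitting $\Delta_u = \Delta_V + R_u$, where $R_u \coloneqq V(\xi,\tau) - u(\xi,\tau)$ is $(\xi,\tau)$-measurable. Both equalities will then reduce to a bias--variance decomposition in which the cross terms vanish upon conditioning on $(\xi,\tau)$.

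For the first equality I would expand
\begin{equation*}
\mathcal{L}_\mathrm{FK}(u) = \E\!\left[(\Delta_V + R_u)^2\right] = \E[\Delta_V^2] + 2\,\E[R_u \Delta_V] + \E[R_u^2].
\end{equation*}
Conditioning on $(\xi,\tau)$ and pulling out the measurable factor $R_u$, the cross term equals $2\,\E\!\left[R_u\,\E[\Delta_V \mid (\xi,\tau)]\right] = 0$. The last term is precisely $\mathcal{L}_\mathrm{Eval}(u)$, and for the first I would use $\E[\Delta_V]=0$ together with \Cref{lem: zero variance} (so that $\Delta_V = S_V$ almost surely) to write $\E[\Delta_V^2] = \V[\Delta_V] = \V[S_V]$. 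Rearranging gives $\mathcal{L}_\mathrm{Eval}(u) = \mathcal{L}_\mathrm{FK}(u) - \V[S_V]$.

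For the second equality I would again substitute $\Delta_V = S_V$ from \Cref{lem: zero variance} and use the linearity of the map $u \mapsto S_u$ (inherited from the linearity of $\nabla_x$) to obtain $\Delta_u - S_u = R_u + (S_V - S_u) = R_u + S_{V-u}$. Squaring and taking expectations produces three terms; the cross term $2\,\E[R_u S_{V-u}]$ vanishes after conditioning on $(\xi,\tau)$ since $R_u$ is measurable and $\E[S_{V-u}\mid(\xi,\tau)]=0$, while $\E[S_{V-u}^2] = \V[S_{V-u}]$ because $\E[S_{V-u}]=0$. This yields $\mathcal{L}_\mathrm{BSDE}(u) = \mathcal{L}_\mathrm{Eval}(u) + \V[S_{V-u}]$, as claimed.

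The assertions about the minima then follow immediately: since $\V[S_V]$ is a constant independent of $u$ and $\mathcal{L}_\mathrm{Eval} \ge 0$ with $\mathcal{L}_\mathrm{Eval}(u)=0$ if and only if $u=V$ up to $(\xi,\tau)$-null sets, both $\mathcal{L}_\mathrm{FK}$ and $\mathcal{L}_\mathrm{BSDE}$ are uniquely minimized at $V$; evaluating at $u=V$ (where $R_V=0$ and $S_{V-V}=0$) gives the stated minimal values, with $\mathcal{L}_\mathrm{FK}(V)=\E[\Delta_V^2]=\V[S_V]$ and $\mathcal{L}_\mathrm{BSDE}(V)=0$. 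I expect the main obstacle to lie not in this algebra but in the regularity bookkeeping: one must ensure that $S_u$ is a well-defined It\^o integral and a genuine martingale (rather than merely a local one) so that $\E[S_{V-u}\mid(\xi,\tau)]=0$ is legitimate, and that the splitting into $\mathcal{L}_\mathrm{Eval}(u)$ and a variance term is integrable. This is where the moment assumptions on $(\xi,\tau)$, the smoothness required of functions in $\mathcal{U}$, and the conditions on $b,\sigma,g$ must be invoked, and it is also why the second identity tacitly restricts to differentiable $u$ even though the $\mathcal{L}_\mathrm{FK}$ identity holds for merely measurable $u$.
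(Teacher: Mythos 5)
Your proof is correct and follows essentially the same route as the paper: the same decomposition $\Delta_u = (V(\xi,\tau)-u(\xi,\tau)) + \Delta_V$ with $\Delta_V = S_V$ from \Cref{lem: zero variance}, the same vanishing of cross terms via the tower property and the zero conditional mean of the stochastic integrals, and the same identification of the remaining term with $\mathcal{L}_\mathrm{Eval}(u)$. Your closing remarks on the martingale and integrability bookkeeping are apt but do not change the substance of the argument.
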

\begin{proof}
See \Cref{app: proofs}.
\end{proof}

\begin{remark}[Origin of the losses]
As already hinted at, the names of the losses originate from the Feynman-Kac (FK) formula and backward stochastic differential equations (BSDE), respectively. The former, as stated in \eqref{eq:fk} and detailed in \Cref{app: feynman-kac}, implies that the solution of the PDE in \eqref{eq:PDE} can be expressed as a conditional expectation, i.e.,
\begin{subequations}
\label{eq: FK contitional expectation}
\begin{align}
    V(x, t) &= \E[g(X_T) | X_t = x] \\
    &=\E[g(X_T) | (\xi, \tau)=(x,t)].
\end{align}
\end{subequations}
This can then be combined with the fact that the conditional expectation is the minimizer of a corresponding $L^2$-optimization problem and one readily recovers $\mathcal{L}_\mathrm{FK}$ as specified in \eqref{eq: FK loss}, see~\citet{beck2021solving}.
The latter loss, $\mathcal{L}_\mathrm{BSDE}$, originates from the BSDE
\begin{equation}
    \mathrm d Y_s = Z_s \cdot \mathrm dW_s, \quad Y_T = g(X_T),
\end{equation}
which can essentially be derived from It\^{o}'s lemma and which involves the backward processes $Y_s = V(X_s, s)$ and $Z_s = (\sigma^\top\nabla_x V)(X_s, s)$, see \Cref{app: BSDEs} for further details. 
The name of the loss $\mathcal{L}_{\mathrm{Eval}}$ refers to the fact that it is mostly used in order to evaluate the solution candidate $u$, using either a known solution $V$ or the reformulation in~\eqref{eq:eval_loss FK}. This is also how we evaluate the mean squared error (MSE) for our methods, see \Cref{app: computational details}.
\end{remark}

\subsection{Estimator Versions of Losses and Robustness Issues}
The expectations in the losses \eqref{eq:eval_loss}, \eqref{eq:eval_loss FK}, \eqref{eq: FK loss}, or \eqref{eq: BSDE loss} can usually not be computed analytically and we need to resort to Monte Carlo approximations $\mathcal{L}^{(K)}$ based on a given number $K \in \N$ of independent samples
\begin{equation*}
(\xi^{(k)},\tau^{(k)}, W^{(k)})\sim (\xi,\tau,W), \quad k=1,\dots, K.
\end{equation*}
In practice, one further has to employ a time discretization of the SDE, yielding the estimators
\begin{subequations}
\label{eq: loss estimators}
\begin{align}
\label{eq: FK loss estimator}
    \widehat{\mathcal{L}}^{(K)}_{\mathrm{FK}}(u) &= \frac{1}{K}\sum_{k = 1}^K \left(\widehat{\Delta}_u^{(k)}\right)^2, \\
   \label{eq: BSDE loss estimator}
    \widehat{\mathcal{L}}^{(K)}_{\mathrm{BSDE}}(u) &= \frac{1}{K}\sum_{k = 1}^K \left(\widehat{\Delta}_u^{(k)} - \widehat{S}_u^{(k)}\right)^2,
\end{align}
\end{subequations}
where $\widehat{\Delta}^{(k)}_u$ and $\widehat{S}_u^{(k)}$ are discretized versions of \eqref{eq: definition Delta_u} and \eqref{eq: definition S_u}, evaluated at the $k$-th sample.
More precisely, we define
\begin{subequations}
\label{eq: estimator discr}
\begin{align}
    \widehat{\Delta}^{(k)}_u &\coloneqq g(\widehat{X}^{(k)}_{J^{(k)}+1})-u(\xi^{(k)},\tau^{(k)}), \\
    \label{eq: bsde estimator discr}
    \widehat{S}_u^{(k)} &\coloneqq \sum_{j=1}^{J^{(k)}} s_j^{(k)},
\end{align}
\end{subequations}
where
\begin{equation}
    \label{eq: stoch integral discr incr}
   s_j^{(k)} \coloneqq (\sigma^\top \nabla_x u)(\widehat{X}_{j}^{(k)},t_j^{(k)})\cdot \big(W^{(k)}_{t_{j+1}} - W^{(k)}_{t_j}\big).
\end{equation}
Here, $(\widehat{X}^{(k)}_j)_{j=1}^{J^{(k)}+1}$ denotes a discretization of the solution to the SDE in~\eqref{eq:SDE} (driven by the Brownian motion $W^{(k)}$) on a time grid $ \tau^{(k)} = t^{(k)}_1 < \dots < t^{(k)}_{J^{(k)}+1} = T$ with initial condition $\widehat{X}^{(k)}_1=\xi^{(k)}$, see \Cref{app: computational details} for details.

In this section we will investigate different statistical properties of the Monte Carlo estimators. It will turn out that even though the loss $\mathcal{L}_\mathrm{FK}$ is prominent for solving linear PDEs in practice (see, e.g., \citet{beck2021solving, berner2020numerically}), it might have some disadvantages from a numerical point of view. In fact,~\Cref{lem: zero variance} and \Cref{prop:opt} already show that, while $\mathcal{L}_\mathrm{BSDE}$ can be minimized to zero expectation, $\mathcal{L}_\mathrm{FK}$ always exhibits the additional term
\begin{equation}
    \E[\Delta_V^2]=\V[\Delta_V] = \V[S_V]  =\E[S_V^2],
\end{equation}
which is a potential source of additional noise. By the It\^{o} isometry, this can also be written as 
\begin{equation}
    \E\left[S_V^2\right]=\E\left[\left(\int_\tau^T \|\sigma^\top \nabla_x V(X_s, s) \|^2\mathrm ds \right)    \right].
\end{equation}
The following proposition shows what this implies for the estimator versions of the losses at the optimum.

\begin{proposition}[Variance of losses]
\label{prop: variance of losses}
For the estimator versions of the losses defined in \eqref{eq: FK loss} and \eqref{eq: BSDE loss} it holds that
\begin{equation*}
    \E\left[ {\mathcal{L}}^{(K)}_{\mathrm{FK}}(V) \right] = \V\left[S_V\right],\,\,\,\,\,\,\, \V\left[ {\mathcal{L}}^{(K)}_{\mathrm{FK}}(V) \right] = \frac{1}{K} \V\left[S_V^2\right]
\end{equation*}
and 
\begin{equation*}
     \E\left[ {\mathcal{L}}^{(K)}_{\mathrm{BSDE}}(V) \right] = 0, \qquad\quad \V\left[ {\mathcal{L}}^{(K)}_{\mathrm{BSDE}}(V) \right] = 0.
\end{equation*}
\end{proposition}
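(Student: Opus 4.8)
The plan is to exploit that both estimators are empirical averages of independent and identically distributed copies, so that their first two moments reduce to the corresponding moments of a single sample, and then to substitute the almost-sure identity $\Delta_V = S_V$ from \Cref{lem: zero variance}.

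First I would record the elementary fact that for any $A \in L^2$ with i.i.d.\ copies $A^{(k)} \sim A$, the empirical mean $\bar A^{(K)} = \frac{1}{K}\sum_{k=1}^K A^{(k)}$ satisfies $\E[\bar A^{(K)}] = \E[A]$ and $\V[\bar A^{(K)}] = \frac{1}{K}\V[A]$, the latter because independence makes the cross terms in the variance of the sum vanish. Applying this with $A = \Delta_V^2$ gives $\E[\mathcal{L}^{(K)}_\mathrm{FK}(V)] = \E[\Delta_V^2]$ and $\V[\mathcal{L}^{(K)}_\mathrm{FK}(V)] = \frac{1}{K}\V[\Delta_V^2]$, and with $A = (\Delta_V - S_V)^2$ the analogous identities for the BSDE estimator.

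For the BSDE estimator the claim is then immediate: \Cref{lem: zero variance} gives $\Delta_V - S_V = 0$ almost surely, so $A = 0$ almost surely and both its mean and its variance vanish. For the FK estimator I would invoke the same lemma in the form $\Delta_V = S_V$ a.s., which upgrades to $\Delta_V^2 = S_V^2$ a.s.; hence $\E[\Delta_V^2] = \E[S_V^2] = \V[S_V]$ (using that $S_V$ is a mean-zero martingale, so $\V[S_V]=\E[S_V^2]$, or equivalently part~(i) of \Cref{prop:opt}) and $\V[\Delta_V^2] = \V[S_V^2]$. Combining these with the two reductions from the previous step yields exactly the four stated identities.

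The only point requiring care is integrability: to make the variance statement meaningful one needs $\Delta_V^2 \in L^2$, i.e.\ $\Delta_V \in L^4$, which by the a.s.\ identity is equivalent to $S_V \in L^4$. I would verify this from the moment and regularity assumptions on $b$, $\sigma$, $g$, and $(\xi,\tau)$ collected in the appendix, which guarantee sufficiently many bounded moments of $X$ and of the integrand $\sigma^\top \nabla_x V$ along the trajectory, so that $\V[S_V^2]<\infty$ and the stochastic integral defining $S_V$ is a genuine martingale with vanishing conditional expectation given $(\xi,\tau)$. This is the main --- and essentially the only --- obstacle; the remaining manipulations are the elementary i.i.d.\ moment computations above.
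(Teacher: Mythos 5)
Your proposal is correct and follows essentially the same route as the paper's proof: reduce the moments of the empirical average to those of a single sample via independence, then substitute the almost-sure identity $\Delta_V = S_V$ from \Cref{lem: zero variance}. The only difference is that you make the $L^4$-integrability of $S_V$ explicit, a point the paper leaves implicit in its standing assumptions.
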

 \begin{proof}
 See \Cref{app: proofs}.
 \end{proof}

\Cref{prop: variance of losses}  shows that neither the expectation nor the variance of $\mathcal{L}^{(K)}_{\mathrm{FK}}$ can be zero at the solution $u = V$ unless $S_V$ or $S_V^2$ are deterministic, which is usually not the case in practice as demonstrated by the numerical examples in~\Cref{sec: examples} and \Cref{fig: loss std evolution}. While the non-zero expectation of the estimator is typically not a problem for the optimization routine, the non-vanishing variance might cause troubles when converging to the global minimum. 

\begin{figure}
  \centering
  \includegraphics[width=1.0\linewidth]{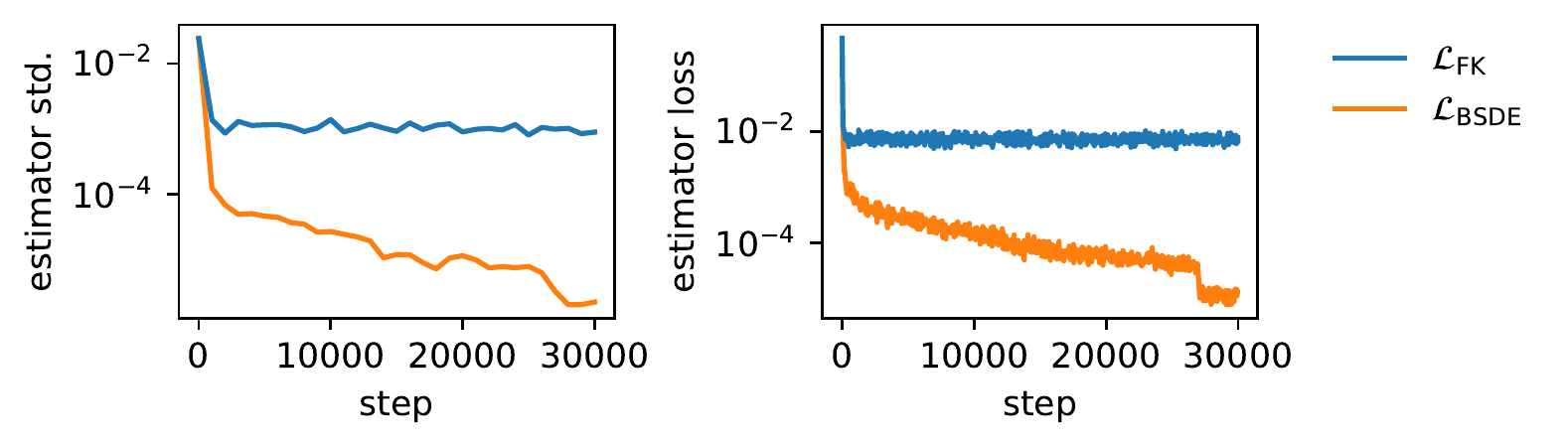}
  \vspace{-0.7cm}
  \caption{Estimator losses and their standard deviations as a function of the gradient steps when solving the HJB equation from \Cref{sec: hjb} with batch size $K = 128$. One observes that the statement of \Cref{prop: variance of losses} already holds in an approximative sense before converging to the solution $V$.}
  \label{fig: loss std evolution}
\end{figure}

Since the considered losses are usually minimized via variants of gradient decent, our further analysis shall therefore focus on the variance of corresponding gradient estimators. For this let us first define an appropriate notion of derivative.
\begin{definition}[Gateaux derivative]
We say that $\mathcal{L}\colon \mathcal{U} \to \R_{\ge 0}$ is \emph{Gateaux differentiable} at $u \in \mathcal{U}$ if for all $\phi\in \mathcal{U}$ the mapping
\begin{equation}
\varepsilon \mapsto  \mathcal{L}(u+ \varepsilon \phi)
\end{equation}
is differentiable at $\varepsilon=0$. 
The Gateaux derivative of $\mathcal{L}$ in direction $\phi$ is then defined as 
\begin{equation}
    \frac{\delta}{\delta u}\mathcal{L}(u; \phi) \coloneqq \frac{\mathrm d}{\mathrm d \varepsilon}\Big|_{\varepsilon=0} \mathcal{L}(u+ \varepsilon \phi).
\end{equation}
\end{definition}

Inspired by an analysis in \citet{nusken2021solving} let us first investigate the variance of derivatives at the solution $u = V$, where intuitively we might want to favor gradient estimators whose variances vanish at the global minimizer of the loss. The following proposition shows that this property is indeed only the case for $\mathcal{L}^{(K)}_\mathrm{BSDE}$.
\begin{proposition}[Gradient variances]
\label{prop:robust}
For the estimator versions of the losses defined in \eqref{eq: FK loss} and \eqref{eq: BSDE loss} and for all $\phi \in \mathcal{U}$ it holds that
\begin{enumerate}[(i)]
    \item $\V\left[\frac{\delta}{\delta u}{\Big|}_{u= V}\mathcal{L}^{(K)}_\mathrm{FK}(u; \phi)\right] = \frac{4}{K} \V\left[S_V \phi(\xi,\tau) \right]$,
    \item $\V\left[\frac{\delta}{\delta u}{\Big|}_{u= V}\mathcal{L}^{(K)}_\mathrm{BSDE}(u; \phi)\right] = 0$. 
\end{enumerate}
\end{proposition}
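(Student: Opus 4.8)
The plan is to differentiate each Monte Carlo estimator termwise in $\varepsilon$ and then exploit the pathwise identity $\Delta_V = S_V$ from \Cref{lem: zero variance}. Since $\mathcal{L}^{(K)}_\mathrm{FK}$ and $\mathcal{L}^{(K)}_\mathrm{BSDE}$ are finite averages over i.i.d.\@ samples, no interchange of differentiation with an expectation is needed: I can differentiate each summand and read off the Gateaux derivative directly, and the variance then follows from i.i.d.\@ scaling.

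First I would record two elementary linearity facts. The quantity $\Delta_u = g(X_T) - u(\xi,\tau)$ is affine in $u$, so under $u \mapsto u + \varepsilon\phi$ we have $\Delta_{u+\varepsilon\phi} = \Delta_u - \varepsilon\,\phi(\xi,\tau)$. The stochastic integral $S_u$ is linear in $u$ through $\nabla_x u$, so writing $S_\phi \coloneqq \int_\tau^T (\sigma^\top\nabla_x\phi)(X_s,s)\cdot\mathrm dW_s$ yields $S_{u+\varepsilon\phi} = S_u + \varepsilon S_\phi$. Differentiating the per-sample terms $(\Delta_{u+\varepsilon\phi})^2$ and $(\Delta_{u+\varepsilon\phi} - S_{u+\varepsilon\phi})^2$ at $\varepsilon = 0$ then gives
\begin{align*}
\frac{\delta}{\delta u}\mathcal{L}^{(K)}_\mathrm{FK}(u;\phi) &= -\frac{2}{K}\sum_{k=1}^K \Delta_u^{(k)}\,\phi(\xi^{(k)},\tau^{(k)}),\\
\frac{\delta}{\delta u}\mathcal{L}^{(K)}_\mathrm{BSDE}(u;\phi) &= -\frac{2}{K}\sum_{k=1}^K \big(\Delta_u^{(k)} - S_u^{(k)}\big)\big(\phi(\xi^{(k)},\tau^{(k)}) + S_\phi^{(k)}\big).
\end{align*}

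Next I would evaluate at $u = V$. For claim (i), \Cref{lem: zero variance} gives $\Delta_V = S_V$ almost surely, so the FK derivative reduces to $-\frac{2}{K}\sum_k S_V^{(k)}\phi(\xi^{(k)},\tau^{(k)})$. As the summands are i.i.d., the variance of the average is $1/K$ times the variance of a single summand, producing $\frac{4}{K}\V[S_V\,\phi(\xi,\tau)]$, which is exactly (i). For claim (ii), the same identity gives $\Delta_V - S_V = 0$ pathwise, so every summand of the BSDE derivative vanishes almost surely; the derivative is therefore the zero random variable, and its variance is $0$, establishing (ii).

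The argument is essentially algebraic, so I do not anticipate a genuine obstacle; the point worth flagging is conceptual rather than technical. The crucial cancellation in (ii) relies on the fact that $\Delta_V = S_V$ holds \emph{pathwise} (not merely in expectation), which is precisely what \Cref{lem: zero variance} supplies, and on carrying out the differentiation at the level of the finite per-sample estimator $\mathcal{L}^{(K)}$ rather than the population loss. A minor technical prerequisite to state is the linearity of $S_u$ in $u$ together with the regularity ensuring that $S_\phi$ is a well-defined square-integrable stochastic integral for every $\phi\in\mathcal{U}$; granting these, no further estimates are required.
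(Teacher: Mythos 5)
Your proof is correct and follows essentially the same route as the paper's: compute the termwise Gateaux derivative of each finite-sample estimator using the affinity of $\Delta_u$ and linearity of $S_u$ in $u$, apply the i.i.d.\@ scaling $\V[\frac{1}{K}\sum_k \cdot] = \frac{1}{K}\V[\cdot]$, and invoke the pathwise identity $\Delta_V = S_V$ from \Cref{lem: zero variance} to conclude both claims. The only cosmetic difference is that you make the linearity substitutions explicit before differentiating, whereas the paper differentiates the squared terms directly; the resulting derivative expressions and the final step are identical.
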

\begin{proof}
See \Cref{app: proofs}.
\end{proof}

Note that even though the gradient estimator $\mathcal{L}^{(K)}_\mathrm{FK}$ does not have vanishing variance at the solution, a large batch size $K$ can counteract this effect. We will study corresponding dependencies between variances and batch sizes in our numerical experiments in \Cref{sec: examples}.

While \Cref{prop:robust} only makes a statement on the gradient variances at the solution $u=V$, the following proposition shall indicate that, at least close to the solution, one might still expect a small variance of the gradients of $\mathcal{L}^{(K)}_\mathrm{BSDE}$, see also Figure~\ref{fig: loss grad std evolution}.

\begin{figure}
  \centering
  \includegraphics[width=1.0\linewidth]{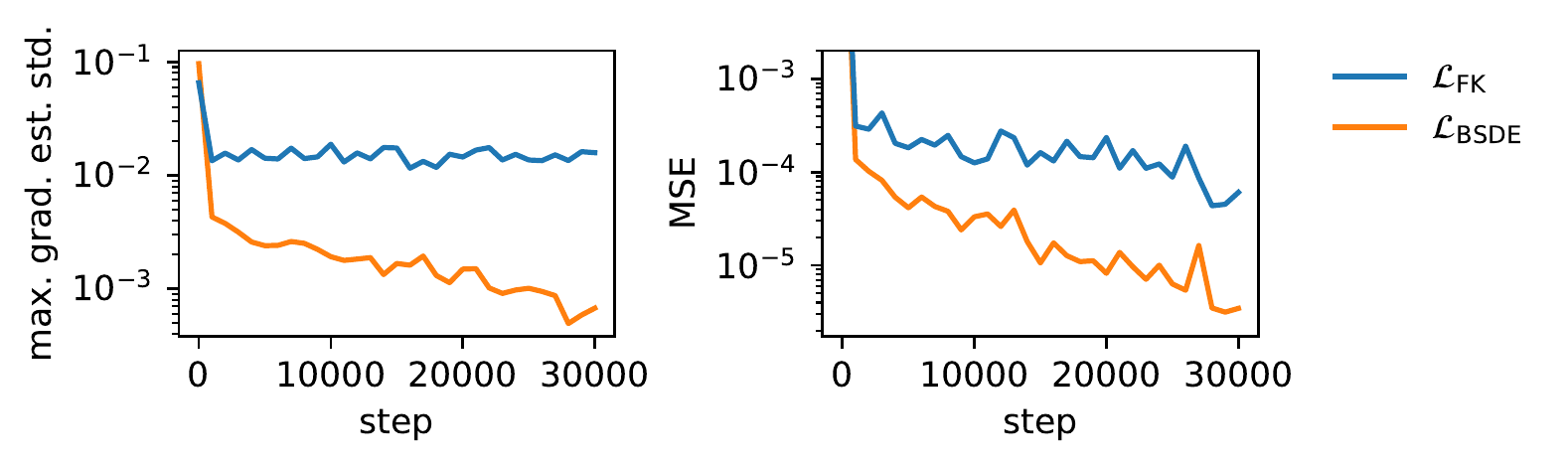}
  \vspace{-0.7cm}
  \caption{Gradient estimator standard deviations (maximum over directions $\phi = \partial_{ \theta_i}\Phi_{\theta}$, where $\Phi_{\theta}$ is a neural network with parameters $\theta$, see \Cref{app: computational details}) and the MSE (on evaluation data) as a function of the gradient steps when solving the HJB equation from \Cref{sec: hjb} with batch size $K=128$. In line with \Cref{prop:robust_around} the standard deviation of the gradient of $\widehat{\mathcal{L}}^{(K)}_\mathrm{BSDE}$ decreases when the solution is sufficiently well approximated.
  }
  \label{fig: loss grad std evolution}
\end{figure}

\begin{proposition}[Stability of $\mathcal{L}_\mathrm{BSDE}$ close to the solution]
\label{prop:robust_around}
Assume it holds almost surely that
\begin{equation}
\label{eq:robus_ass_1}
    | u(\xi, \tau) - V(\xi, \tau) | \le \varepsilon \quad \text{and} \quad  | \phi(\xi, \tau) | \le \kappa
\end{equation}
and that there exists $\gamma\in \R_{>0}$ such that for every $(x,t) \in \R^d\times [0,T]$ it holds that
\begin{equation}
\label{eq:robus_ass_2}
    \| \nabla_x u(x, t) - \nabla_x V(x, t) \| \le \varepsilon (1+\|x\|^\gamma) 
\end{equation}
and that
\begin{equation}
    \| \nabla_x \phi(x, t) \| \le \kappa (1+\|x\|^\gamma).
\end{equation}
Then the variance can be bounded by
\begin{equation}
    \V\left[\frac{\delta}{\delta u}\mathcal{L}^{(K)}_\mathrm{BSDE}(u; \phi)\right] \le \frac{C\varepsilon^2 \kappa^2}{K},
\end{equation}
where $C \in\R_{>0}$ is a constant that depends only on $\gamma$, $\xi$, $\tau$, and the given PDE in~\eqref{eq:PDE}.
\end{proposition}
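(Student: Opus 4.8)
The plan is to compute the Gateaux derivative of the Monte Carlo estimator explicitly, reduce the variance to a single-sample second moment using independence, and then convert the smallness hypotheses on $u-V$ and $\phi$ into the claimed bound via \Cref{lem: zero variance} together with moment estimates for stochastic integrals. First I would perturb $u \mapsto u + \varepsilon\phi$ inside the $k$-th summand. Since $\Delta_{u+\varepsilon\phi} = \Delta_u - \varepsilon\,\phi(\xi,\tau)$ and, by linearity of the map $u \mapsto S_u$, also $S_{u+\varepsilon\phi} = S_u + \varepsilon\,S_\phi$, expanding the square and differentiating at $\varepsilon=0$ yields
\begin{equation*}
    \frac{\delta}{\delta u}\mathcal{L}^{(K)}_\mathrm{BSDE}(u;\phi) = -\frac{2}{K}\sum_{k=1}^K \big(\Delta_u^{(k)} - S_u^{(k)}\big)\big(\phi(\xi^{(k)},\tau^{(k)}) + S_\phi^{(k)}\big).
\end{equation*}
Because the $K$ samples are independent and identically distributed, the variance of this average is $\tfrac{4}{K}$ times the variance of a single factor, so it suffices to bound
\begin{equation*}
    \V\big[(\Delta_u - S_u)(\phi(\xi,\tau)+S_\phi)\big] \le \E\big[(\Delta_u-S_u)^2\,(\phi(\xi,\tau)+S_\phi)^2\big].
\end{equation*}

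The key step is to rewrite the first factor. Setting $w \coloneqq u - V$ and subtracting the martingale identity $\Delta_V = S_V$ from \Cref{lem: zero variance}, one finds $\Delta_u - S_u = -\big(w(\xi,\tau) + S_w\big)$, where $S_w = \int_\tau^T(\sigma^\top\nabla_x w)(X_s,s)\cdot\mathrm dW_s$. The assumptions in~\eqref{eq:robus_ass_1} and~\eqref{eq:robus_ass_2} then give $|w(\xi,\tau)| \le \varepsilon$ and $|\phi(\xi,\tau)| \le \kappa$ almost surely, while controlling the integrands of $S_w$ and $S_\phi$ by $\varepsilon\,\|\sigma(X_s,s)\|(1+\|X_s\|^\gamma)$ and $\kappa\,\|\sigma(X_s,s)\|(1+\|X_s\|^\gamma)$, respectively.

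To finish, I would apply the Cauchy-Schwarz inequality to split the second moment as $\sqrt{\E[(w(\xi,\tau)+S_w)^4]}\,\sqrt{\E[(\phi(\xi,\tau)+S_\phi)^4]}$ and bound each fourth moment separately. Using $(a+b)^4\le 8(a^4+b^4)$, the pointwise terms contribute $\varepsilon^4$ and $\kappa^4$, whereas the stochastic integrals are handled by the Burkholder-Davis-Gundy inequality, which bounds $\E[S_w^4]$ by a constant times $\E[(\int_\tau^T\|\sigma^\top\nabla_x w\|^2\mathrm ds)^2]\le \varepsilon^4\,\E[(\int_\tau^T\|\sigma(X_s,s)\|^2(1+\|X_s\|^\gamma)^2\mathrm ds)^2]$, and analogously for $S_\phi$. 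Collecting terms gives $\E[(w+S_w)^4]\le C'\varepsilon^4$ and $\E[(\phi+S_\phi)^4]\le C'\kappa^4$, so that the second moment is at most $C'\varepsilon^2\kappa^2$ and the variance is bounded by $\tfrac{C\varepsilon^2\kappa^2}{K}$ with $C = 4C'$.

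The main obstacle I anticipate is verifying that the constant emerging from Burkholder-Davis-Gundy is finite and depends only on the stated quantities, i.e.\ that $\E[(\int_\tau^T\|\sigma(X_s,s)\|^2(1+\|X_s\|^\gamma)^2\mathrm ds)^2] < \infty$. This requires sufficiently high bounded moments of $\sup_{s\in[\tau,T]}\|X_s\|$, which follow from the linear-growth and moment conditions collected in \Cref{as: conditions on b sigma g} together with the assumption that $\xi$ and $\tau$ are $\mathcal{F}_0$-measurable with enough bounded moments; the $\mathcal{F}_0$-measurability is also what licenses the conditional It\^{o} isometry and the BDG estimate on the random interval $[\tau,T]$.
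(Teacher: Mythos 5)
Your proposal is correct and follows essentially the same route as the paper's proof: you write the Gateaux derivative of the estimator explicitly, reduce to a single-sample variance by independence, substitute $\Delta_u - S_u = -\big((u-V)(\xi,\tau) + S_{u-V}\big)$ via \Cref{lem: zero variance}, and then apply Cauchy--Schwarz, the elementary bound $(a+b)^4 \le 8(a^4+b^4)$, and the Burkholder--Davis--Gundy inequality to the fourth moments, exactly as the paper does. Your closing remark on the finiteness of the resulting constant (moments of $\sup_{s}\|X_s\|$ under \Cref{as: conditions on b sigma g}) is a reasonable elaboration of a point the paper leaves implicit.
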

 \begin{proof}
 See \Cref{app: proofs}.
 \end{proof}

\begin{remark}
\label{rem: approx grad}
We remark that the simultaneous approximation of a function and its gradient by a neural network, as required in~\eqref{eq:robus_ass_1} and~\eqref{eq:robus_ass_2}, has, for instance, been considered in~\citet{berner2019towards}.
Using \Cref{prop:opt} we can see that it is actually sufficient to approximate the gradient $\nabla_x V$ (which only implies the approximation of $V$ up to a constant) in order to reach zero variance. We also note that, for $u$ far away from the solution $V$, the term $S_u$ could potentially increase the estimator variance, see, e.g., \Cref{fig: bs detach} in the appendix.
\end{remark}

\subsection{Algorithmic Details and Neural Network Approximations}

So far our analysis has been agnostic of the choice of the approximating function space $\mathcal{U}$. In practice, we usually rely on functions represented by a neural network $\Phi_{\theta} \colon \R^{d+1}\to\R$ with parameters $\theta \in\R^p$. The training of such a network is then conducted by minimizing a suitable estimated and time-discretized loss $\widehat{\mathcal{L}}^{(K)}$ using some variant of gradient descent. In its simplest form, this corresponds to the update rule
\begin{equation}
\label{eq:sgd}
    \theta^{(m+1)} = \theta^{(m)} - \lambda \, \nabla_\theta \widehat{\mathcal{L}}^{(K)} (\Phi_{\theta^{(m)}}), 
\end{equation}
where $\lambda\in\R_{>0}$ is a learning rate and $\theta^{(0)}\in\R^p$ is a random initialization. Let us refer to \Cref{alg: algorithm} in \Cref{app: computational details} for further details.

Relating to the Gateaux derivatives from the previous section, let us note that we are particularly interested in the directions\footnote{We assume that the functions $\Phi_\theta$, $\theta \in \R^p$, as well as all partial derivatives $\partial_{ \theta_i}\Phi_\theta$ lie in $\mathcal{U}$.} $\phi=\partial_{ \theta_i}\Phi_\theta$ for $i \in \{1, \dots, p\}$. This choice is motivated by~\eqref{eq:sgd} and the chain rule of the Gateaux derivative, which, under suitable assumptions, states that
\begin{equation}
    \partial_{\theta_i} \widehat{\mathcal{L}}^{(K)}(\Phi_\theta) = \frac{\delta}{\delta u}{\Big|}_{u= \Phi_{\theta}}\widehat{\mathcal{L}}^{(K)}\left(u; \partial_{\theta_i} \Phi_\theta\right).
\end{equation}

Now, in order to elaborate on some computational aspects, let us state the actual gradients of the corresponding losses. For a neural network $u=\Phi_\theta$, we compute
\begin{subequations}
\begin{align}
\label{eq: gradient FK loss estimator}
       \nabla_{\theta} \widehat{\mathcal{L}}^{(K)}_{\mathrm{FK}}(u) &=- \frac{2}{K}\sum_{k = 1}^{K} \widehat{\Delta}_u^{(k)} \nabla_\theta u,\\
\label{eq: gradient BSDE loss estimator} \nabla_{\theta} \widehat{\mathcal{L}}^{(K)}_{\mathrm{BSDE}}(u) &= -\frac{2}{K}\sum_{k = 1}^{K} e^{(k)} \left( \nabla_\theta u + \nabla_\theta \widehat{S}_u^{(k)} \right),
\end{align}
\end{subequations}
where $e^{(k)} \coloneqq  \widehat{\Delta}_u^{(k)} - \widehat{S}_u^{(k)}$ is the sample-wise error.

From a computational point of view, note that the derivative of $\widehat{\mathcal{L}}^{(K)}_{\mathrm{FK}}$ as displayed in \eqref{eq: gradient FK loss estimator} only relies on a single gradient computation of the approximating function $u$ (w.r.t.\@ the model parameters $\theta$), whereas the derivative of $\widehat{\mathcal{L}}^{(K)}_{\mathrm{BSDE}}$ as displayed in \eqref{eq: gradient BSDE loss estimator} needs evaluations of $\nabla_\theta s_j^{(k)}$, see~\eqref{eq: estimator discr} and~\eqref{eq: stoch integral discr incr}. This might be costly for automatic differentiation (autodiff) tools and we will suggest computational speed-ups and memory-efficient versions in the next section.

Further, note that due to the term $\widehat{S}^{(k)}_u$ the loss $\widehat{\mathcal{L}}^{(K)}_{\mathrm{BSDE}}$ always needs to rely on discretized SDEs, whereas $\widehat{\mathcal{L}}^{(K)}_{\mathrm{FK}}$ can be computed without SDE simulations whenever explicit solutions of the stochastic processes are available (which, admittedly, is often not the case in practice -- see, however, the examples in \Cref{sec: heat equation,sec: black-scholes}).

\subsection{Further Computational Adjustments}
\label{sec: computational}
For the loss $\mathcal{L}_\mathrm{BSDE}$ as defined in \eqref{eq: BSDE loss} we can make further adjustments that will turn out to be relevant from a computational point of view. In particular, let us suggest two alternative versions that are based on the fact that the function $u$ in \eqref{eq: BSDE loss} appears in two instances, which can be treated differently. One idea is to differentiate only with respect to one of its appearances. To this end, let us define the loss
\begin{equation}
    \mathcal{L}_\mathrm{BSDE}^\mathrm{detach}(u, w) := \E\left[\left(\Delta_u -  S_w \right)^2 \right],
\end{equation}
now depending on two arguments. We then compute derivatives according to
\begin{equation} 
\label{eq:detach}
\frac{\delta}{\delta u} \Big|_{w = u}\mathcal{L}_\mathrm{BSDE}^\mathrm{detach}(u, w),
\end{equation}
where in automatic differentiation tools setting $w = u$ after computing the derivative is usually obtained by detaching $w$ from the computational graph\footnote{In PyTorch and TensorFlow this can be achieved by the \texttt{detach} and \texttt{stop\_gradient} operations.}. One can readily check that $\mathcal{L}_\mathrm{BSDE}^\mathrm{detach}$ is a valid loss for problem \eqref{eq:PDE} that keeps the robustness property stated in \Cref{prop: variance of losses,prop:robust} intact. At the same time we expect a great reduction of the computational effort since the second-order derivatives of the form $\nabla_\theta \partial_{x_i} u$, appearing in $\nabla_\theta \widehat{S}_u^{(k)}$, do not need to be computed anymore -- in particular note that multiple such derivatives would need to be evaluated in actual implementations due to the discretization of the integral, as stated in \eqref{eq: bsde estimator discr}. We refer to \Cref{sec: examples} where we can indeed demonstrate substantial computational improvements in several settings.

Another version of $\mathcal{L}_\mathrm{BSDE}$ relies on explicitly modelling the gradient of the approximating function $u$ by an extra function, see also~\citet{zhou2021actor}. We can therefore define the loss
\begin{equation}
\label{eq: BSDE loss extra model}
    \mathcal{L}_\text{BSDE}^\text{grad}(u, r) := \E\left[\left(\Delta_u -  \widetilde{S}_r \right)^2 \right],
\end{equation}
where now
\begin{equation}
    \widetilde{S}_r = \int_\tau^T \big(\sigma^\top r \big)(X_s, s) \cdot \mathrm dW_s.
\end{equation}
The additional function $r : \R^d \times [0, T] \to \R^d$ can be modelled with a separate neural network. The loss \eqref{eq: BSDE loss extra model} is then minimized with respect to the parameters of the functions $u$ and $r$ simultaneously, which again avoids the computation of second-order derivatives.

Finally, let us introduce another viable option for decreasing computational resources, which is essentially a more memory-efficient way of computing the derivative formula stated in \eqref{eq: gradient BSDE loss estimator}. Usually, one performs a forward pass to compute $\widehat{\mathcal{L}}^{(K)}_{\mathrm{BSDE}}(u)$ and then 
uses automatic differentiation to obtain the gradient $\nabla_{\theta} \widehat{\mathcal{L}}^{(K)}_{\mathrm{BSDE}}(u)$. However, this requires to keep track of all the discretization steps for the stochastic integral $(s_j^{(k)})_{j=1}^{J^{(k)}}$, as they all depend on $\theta$. This leads to a GPU memory consumption which scales linearly in the number of steps $J^{(k)}$ and, already for small $K$, exceeds common capacities, see \Cref{fig: scaling bs,fig: gpu mem} in the appendix.

To circumvent this issue, we observe that the derivative in~\eqref{eq: gradient BSDE loss estimator} can be decomposed into 
\begin{equation}
   G_1 \coloneqq -\frac{2}{K}\sum_{k = 1}^{K} e^{(k)} \nabla_\theta u,
\end{equation}
which equals the derivative of the loss $\mathcal{L}_\mathrm{BSDE}^\mathrm{detach}$, and
\begin{equation}
  \sum_{j=1}^{J} G_{2,j} \coloneqq  \sum_{j=1}^{J} -\frac{2}{K}\sum_{k = 1}^{K}   e^{(k)}\nabla_\theta s_j^{(k)},
\end{equation}
where $J= \max_{k=1}^K J^{(k)}$ and $s^{(k)}_{j}=0$ for $j > J^{(k)}$.
Therefore, we can first compute and cache the errors $(e^{(k)})_{k=1}^K$ and the gradients $G_1$. Then, we repeat the same simulation, i.e., use the same realization of $(\xi^{(k)},\tau^{(k)},W^{(k)})$, to compute and accumulate the gradients $(G_{2,j})_{j=1}^{J}$ on-the-fly. In doing so, automatic differentiation only needs to track a single discretization step $s_j^{(k)}$. This keeps the memory footprint independent of the number of steps $J^{(k)}$ or, equivalently, the step-size $t^{(k)}_{j+1} - t^{(k)}_j$ -- however, at the cost of an increased computational time\footnote{For most practical use-cases one is interested in obtaining the most accurate solution and there is only little restriction on the time for training, which needs to be done once per PDE. However, for time-critical applications, we present the performance w.r.t.\@ the training time in Figures~\ref{fig: heat time} and~\ref{fig: heat time scaling} in the appendix.}. Of course, the same trick can also be applied to the loss $\mathcal{L}_\mathrm{BSDE}^\mathrm{grad}$. Even though from a mathematical perspective the losses are still the same as $\mathcal{L}_\mathrm{BSDE}$ and $\mathcal{L}_\mathrm{BSDE}^\mathrm{grad}$, we give them the new names
\begin{equation}
     \mathcal{L}_\mathrm{BSDE,\,eff} \qquad \text{and} \qquad \mathcal{L}_\mathrm{BSDE,\,eff}^\mathrm{grad}
\end{equation}
in order to relate to their significant computational improvements, which we will illustrate in the next section.

\section{Numerical Examples}
\label{sec: examples}

In this section we aim to illustrate our theoretical results on three high-dimensional PDEs\footnote{The associated code can be found at \url{https://github.com/juliusberner/robust_kolmogorov}.}. Specifically, we consider a heat equation with paraboloid terminal condition, a HJB equation arising in molecular dynamics, and a Black-Scholes model with correlated noise. The respective reference solutions are given as a closed-form expression, as a tensor-product of one-dimensional approximations obtained by finite-difference methods, and as a Monte-Carlo approximation using the Feynman-Kac formula, see Appendix~\ref{app: feynman-kac}.

In all of our numerical experiments we use comparable setups which are detailed in \Cref{app: computational details}.
We emphasize that, instead of using a train/val/test split on a given finite data set, our setting allows us to simulate new i.i.d.\@ samples $(\xi^{(k)},\tau^{(k)}, W^{(k)})$ for each batch on demand during training. To evaluate our methods, we compute the MSE between the neural network approximation and the reference solution at points $(x,t)$ which are sampled independently of the training data, i.e., on previously unseen points, see \Cref{app: computational details}.

We will show that all considered losses are viable in the sense that they yield appropriate approximations of solutions to the associated PDEs. At the same time we will see substantial performance differences, which can be attributed to the different variance properties outlined before. 

\subsection{Heat Equation} 
\label{sec: heat equation}
Let us first consider a version of the heat equation with paraboloid terminal condition by setting $b(x,t)=0$, $\sigma(x,t)=\bar{\sigma}\in \R^{d\times d}$, and $g(x)=\|x\|^2$. This allows for explicit solutions given by
\begin{equation}
    V(x,t) = \|x\|^2+\operatorname{Trace}(\bar{\sigma} \bar{\sigma}^\top)(T-t)
\end{equation}
and
\begin{equation}
    X_s = \xi + \bar{\sigma} (W_s - W_\tau),
\end{equation}
see also~\citet{beck2021solving}.
We analyze the special case $\bar{\sigma}=\nu\,\operatorname{Id}$ with $\nu \in \R$, where we have that
\begin{equation}
\label{eq:int_heat}
     \int_{\tau}^T \bar{\sigma}^\top \nabla_x V(X_s, s) \cdot \mathrm dW_s = 2 \,\nu \int_{\tau}^T X_s \cdot \mathrm dW_s.
\end{equation}
Using the fact that $\int_{0}^{\mathcal{T}} B_s \cdot \mathrm dB_s = \frac{1}{2}\big(\|B_\mathcal{T}\|^2-d\mathcal{T}\big)$, where $\mathcal{T}=T-\tau$ and $B_s = W_{\tau+s}-W_\tau$, the stochastic integral in~\eqref{eq:int_heat} evaluates to the following closed-form formula
\begin{equation}
    2 \nu \xi \cdot(W_T - W_\tau) + \nu^2\big(\|W_T-W_\tau\|^2 -d(T-\tau)\big).
\end{equation}
This expression shows how the quantities in \Cref{prop: variance of losses,prop:robust} may behave. For a fair comparison, however, we do not make use of this explicit representation.

For our experiments, we consider $d=50$, $T=1$, $\nu=0.5$, $\xi \sim \mathrm{Unif}([-0.5,0.5]^d)$, and $\tau\sim \mathrm{Unif}([0,1])$.
For a fixed batch size all of our proposed methods significantly outperform the standard approach of minimizing $\mathcal{L}_{\mathrm{FK}}$, see \Cref{fig: loss std first page}. For larger batch sizes the losses $\mathcal{L}_{\mathrm{BSDE}}$ and $\mathcal{L}^{\mathrm{grad}}_{\mathrm{BSDE}}$ run out of our memory limit of $8$ GiB and the performance of $\mathcal{L}_{\mathrm{FK}}$ improves. However, when using the efficient versions or $\mathcal{L}^{\mathrm{detach}}_{\mathrm{BSDE}}$, we still outperform the baseline by a substantial margin. As suggested by our theory, the performance gains are based on the variance reducing effect of the stochastic integral, see Figures \ref{fig: loss std first page} and \ref{fig: grad std heat}. While the gradient $\nabla_x V$ can be best approximated by $\mathcal{L}^{\mathrm{grad}}_{\mathrm{BSDE,\,eff}}$ (see also \Cref{fig: heat plot} in the appendix), which reduces the variance according to \Cref{prop:robust_around}, the loss $\mathcal{L}_{\mathrm{BSDE,\,eff}}$ still performs better in approximating the solution $V$. This can be motivated by the fact that in case of $\mathcal{L}^{\mathrm{grad}}_{\mathrm{BSDE,\,eff}}$ one network learns the gradient and another one independently learns the solution. In contrast, in the case of $\mathcal{L}_{\mathrm{BSDE,\,eff}}$, a single network is trained to simultaneously approximate $V$ and its gradient $\nabla_x V$. Note that these observations are consistent across various diffusivities $\nu$, see Figures \ref{fig: diff scaling} and \ref{fig: heat diff scaling}.

\begin{figure}
  \centering
  \includegraphics[width=1.0\linewidth]{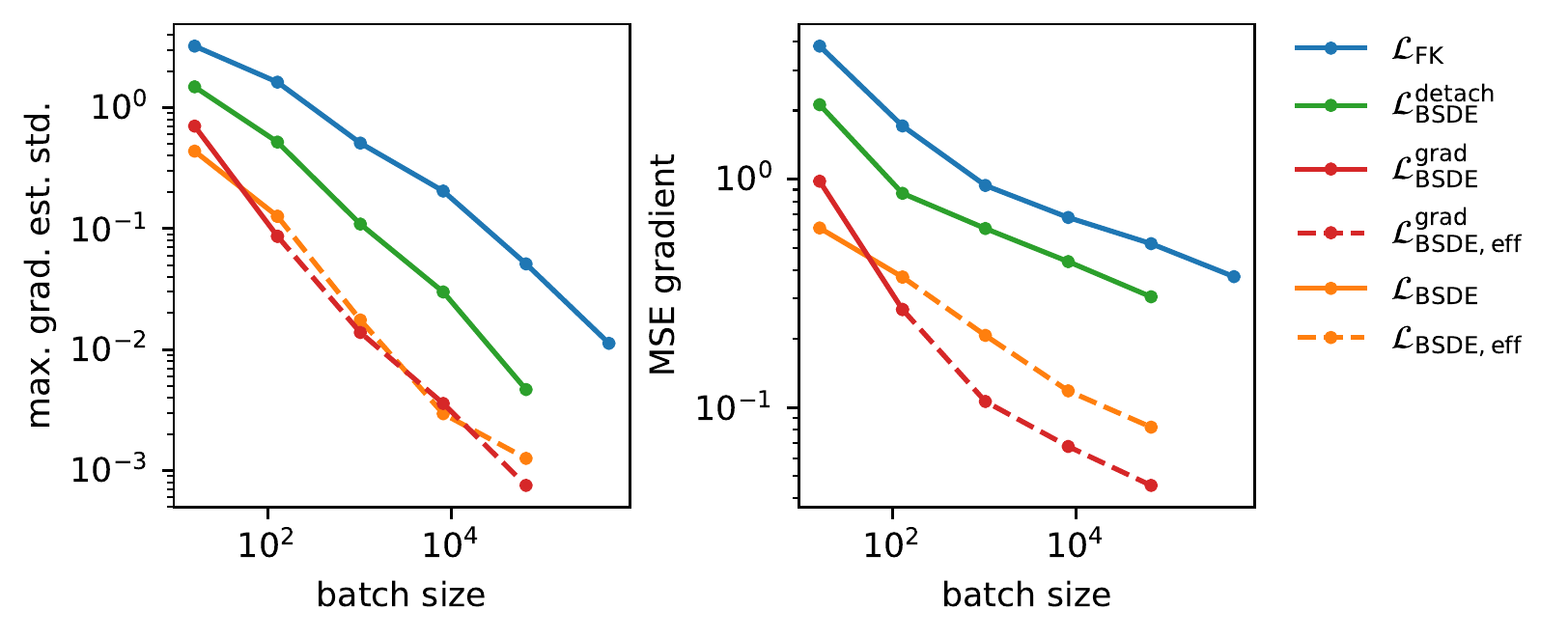}
  \vspace{-0.7cm}
  \caption{Standard deviation of the
  gradient estimators (maximum over directions $\phi = \partial_{ \theta_i}\Phi_{\theta}$) and MSE of the gradients (on evaluation data)
  in the setup of \Cref{fig: loss std first page}, see also \Cref{app: computational details}. In line with \Cref{prop:robust_around} and Remark~\ref{rem: approx grad}, the standard deviations of the loss and gradient estimators are smaller for the BSDE-based losses, especially when the gradient $\nabla_x V$ is well approximated.}
  \label{fig: grad std heat}
\end{figure}

\subsection{Black--Scholes Model with Correlated Noise}
\label{sec: black-scholes}

As a second example we consider a version of the celebrated Black--Scholes model from financial engineering \citep{black1973pricing}, given by
\begin{equation}
    \sigma(x,t) =  \operatorname{diag}(\beta_1  x_1, \dots, \beta_d x_d) \bar{\sigma}, \quad b(x,t) = \bar{b}x,
\end{equation}
with $\beta \in \R^d, \bar{\sigma}\in\R^{d\times d}, \bar{b} \in \R$, and
\begin{equation}
    g(x) = \max\left\{0, \kappa-\min_{i=1}^d x_i\right\}.
\end{equation}
It models a rainbow European put option, giving its holder the right to sell the minimum of underlying assets at the strike price $\kappa\in\R_{>0}$ at time $T$.

For the experiments we choose $d=50$, $T=1$, $\bar{b}=-0.05$, $\kappa=5.5$,
$\beta = (0.1+i/(2d))_{i=1}^d$,
\begin{equation}
    \xi\sim \mathrm{Unif}([4.5,5.5]^d), \quad \tau \sim \mathrm{Unif}([0,1]),
\end{equation}
and $\bar{\sigma}$ to be the lower triangular matrix arising from the Cholesky decomposition
$\bar{\sigma}\bar{\sigma}^\top = Q$ with \begin{equation}
    Q_{i,j}=0.5 (1 + \delta_{i,j}), \quad i,j\in\{1,\dots,d\},
\end{equation}
where $\delta_{i,j}$ denotes the Kronecker delta. There does not exist a closed form solution $V$, however, one can use Monte Carlo sampling and the representation
\begin{equation}
    (X_s)_i = \xi_i e^{ \big(\bar{b} - \frac{\| \beta_i\sigma_i \|^2}{2} \big) (s-\tau) + \beta_i\sigma_i \cdot (W_{s}-W_{\tau})},
\end{equation}
where $\sigma_i$ is the $i$-th row of $\bar{\sigma}$,
in order to evaluate the solution pointwise, see~\citet{beck2021solving}. Note that, in case of the loss $\mathcal{L}_\mathrm{FK}$, such a closed-form expression could be used instead of the Euler-Maruyama scheme in order to speed up training. However, for most settings such a representation is not available and, for the sake of a fair comparison, we did not make use of it during training.

Note that for this example the coefficient functions $\sigma$ and $b$ depend on $x$, which makes the discretization of the SDE more delicate. The loss $\mathcal{L}^{\mathrm{detach}}_{\mathrm{BSDE}}$ can be sensitive to initial performance and performs suboptimally in this example, see Remark~\ref{rem: approx grad} and \Cref{fig: bs detach}. The other losses again underline our theoretical results and, in particular, substantially outperform the loss $\mathcal{L}_\mathrm{FK}$, see \Cref{fig: scaling bs}. As suggested by our results, a larger batch size generally improves the robustness. However, the efficient versions of $\mathcal{L}_{\mathrm{BSDE}}$ and $\mathcal{L}^{\mathrm{grad}}_{\mathrm{BSDE}}$ do not need significantly more GPU memory than $\mathcal{L}_\mathrm{FK}$ such that they can be used with the same maximal batch size.

\begin{figure}
  \centering
  \includegraphics[width=1.0\linewidth]{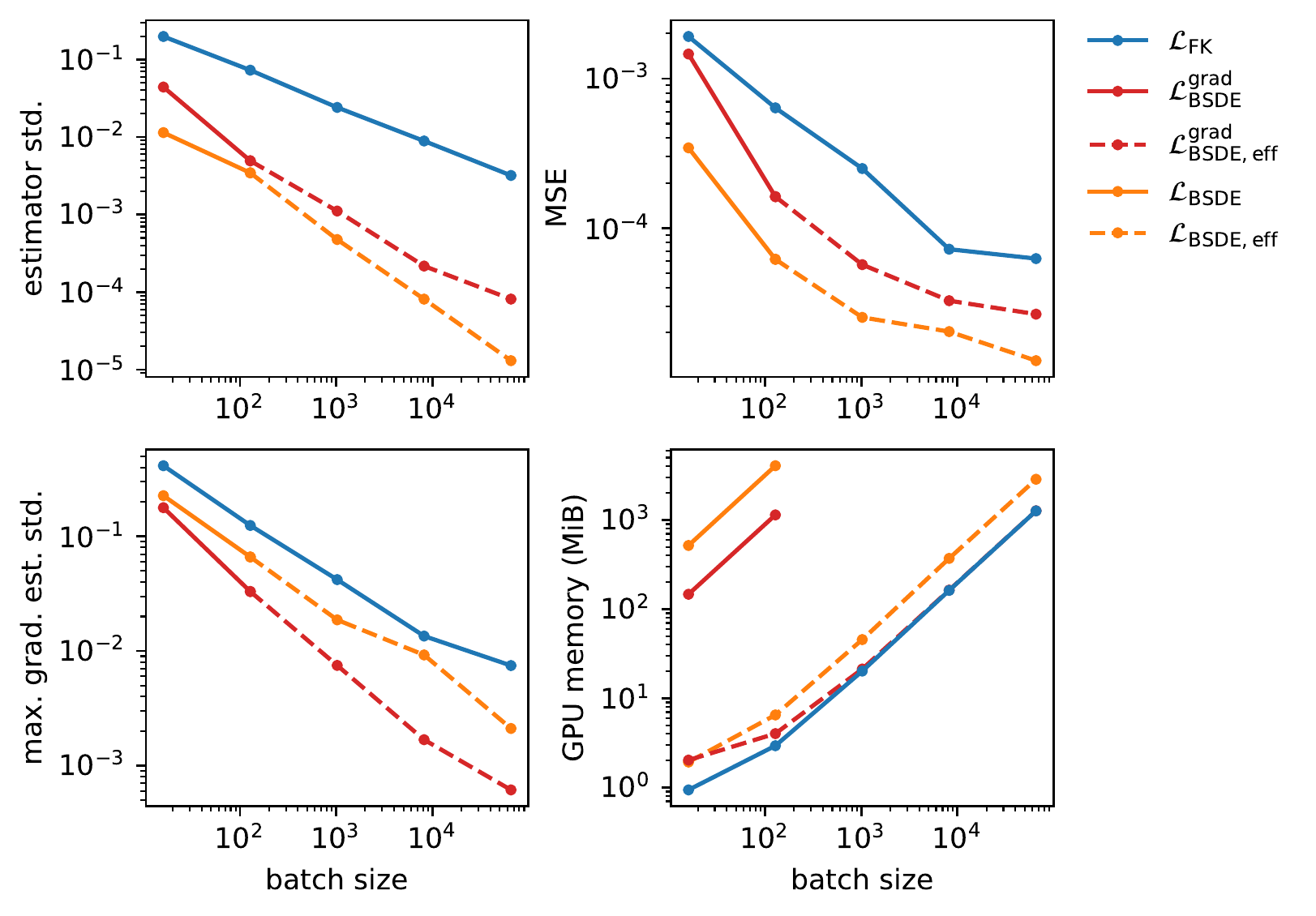}
  \vspace{-0.7cm}
  \caption{Estimator standard deviation, gradient estimator standard deviation (maximum over directions $\phi = \partial_{ \theta_i}\Phi_{\theta}$), and MSE (on evaluation data) after $30k$ gradient steps for the Black--Scholes model in \Cref{sec: black-scholes}. The last plot shows that $\mathcal{L}_{\mathrm{BSDE}}$ and $\mathcal{L}^{\mathrm{grad}}_{\mathrm{BSDE}}$ cannot be used for batch size $K=1024$ and larger as it would exceed the GPU memory limit of $8$ GiB. The same holds for the other losses at batch size $K=524288$.}
  \label{fig: scaling bs}
\end{figure}

\subsection{Hamilton--Jacobi--Bellman Equation}
\label{sec: hjb}
Finally, let us consider the nonlinear Hamilton--Jacobi--Bellman equation
\begin{align}
\label{eq: HJB PDE}
    (\partial_t + L)\widetilde{V}(x, t) + f(x, t)  = \frac{1}{2} \big\|(\sigma^\top \nabla_x \widetilde{V})(x, t)\big\|^2
\end{align}
for $(x, t) \in {\R}^d \times [0, T)$, with terminal condition
\begin{align}
    \widetilde{V}(x, T) &= \widetilde{g}(x)
\end{align}
for $x \in {\R}^d$. This prominent PDE from control theory corresponds to the controlled stochastic process
\begin{equation*}
    \mathrm dX^v_s = \left(b(X^v_s, s) + (\sigma v)(X^v_s, s) \right)\,\mathrm d s + \sigma(X^v_s, s) \,\mathrm d W_s,
\end{equation*}
where the control $v \in C(\R^d \times [0, T], \R^d)$ can be thought of as a steering force to be chosen so as to minimize a given cost function
\begin{equation}
\label{eq: control costs}
    J(v) = \E\left[\int_0^T \left(f_s + \frac{1}{2}\|v_s\|^2\right)\mathrm ds + \widetilde{g}(X_T^v)  \right],
\end{equation}
with the short-hands $f_s \coloneqq f(X_s^v, s)$ and $v_s \coloneqq v(X_s^v, s)$.
In the above, $f \in C(\R^d \times [0, T], \R)$ represents running costs (in addition to the squared costs on the control $v$) and $\widetilde{g} \in C(\R^d, \R)$ specifies the terminal costs. It turns out that one can recover the optimal control $v^*$ that minimizes the costs \eqref{eq: control costs} from the solution to the PDE in \eqref{eq: HJB PDE} by the relation $v^* = -\sigma^\top \nabla_x \widetilde{V}$.

Now, with the transformation\footnote{This transformation is also known as \emph{Hopf-Cole transformation}, see, e.g., Section 4.4.1 in \citet{evans2010}. See also \citet{hartmann2017variational} for a discussion on related applications in importance sampling of stochastic processes relevant in computational statistical physics.} 
$V = \exp(-\widetilde{V})$, we can convert the nonlinear PDE in \eqref{eq: HJB PDE} to a linear one of the form \eqref{eq:PDE} (or, to be precise, of the form \eqref{eq: general Feynman-Kac BVP}) with boundary condition $g = \exp(-\widetilde{g})$, which allows us to apply \Cref{alg: algorithm}, see \Cref{lem: Linearization of HJB equation} for the details.

For our experiments, we consider a problem that is prominent in molecular dynamics and has been suggested in \citet{nusken2021solving}. We define the drift to be $b = -\nabla_x \Psi$, with $ \Psi(x) = \kappa \sum_{i=1}^d (x_i^2 - 1)^2$
being a potential of double well type. In applications, the potential represents the energy associated to a system of atoms (i.e. molecule) and one is often interested in how molecular configurations change over time. This is then related to transition paths between metastable states of the stochastic dynamics, whose sampling poses formidable computational challenges, see e.g. \citet{hartmann2021nonasymptotic}.

For an example, let us define a terminal value that is supported in one of the $2^d$ minima of $\Psi$, namely $g(x) = \exp(-\eta \sum_{i=1}^d (x_i - 1)^2 )$ and set $f = 0$. We choose $\eta = 0.04$ and $\kappa = 0.1$ in dimension $d = 10$ as well as $\sigma = \operatorname{Id}$. The left panel of \Cref{fig: HJB double well} shows the original potential $\Psi$ as well as a tilted version that can be derived from the solution of the HJB equation \eqref{eq: HJB PDE} via $\Psi^* = \Psi + \sigma \sigma^\top \widetilde{V}$, computed with a finite difference reference method as well as with our algorithm. We can see that both solutions agree for all of our considered methods. However, note that for the optimal control functions the BSDE-based losses are again superior to $\mathcal{L}_{\mathrm{FK}}$, as displayed in the right panel. This is in line with the more accurate approximation of the gradient, which is demonstrated by a performance comparison in \Cref{fig: hjb scaling} in the appendix.

\begin{figure}
  \centering
  \includegraphics[width=1.0\linewidth]{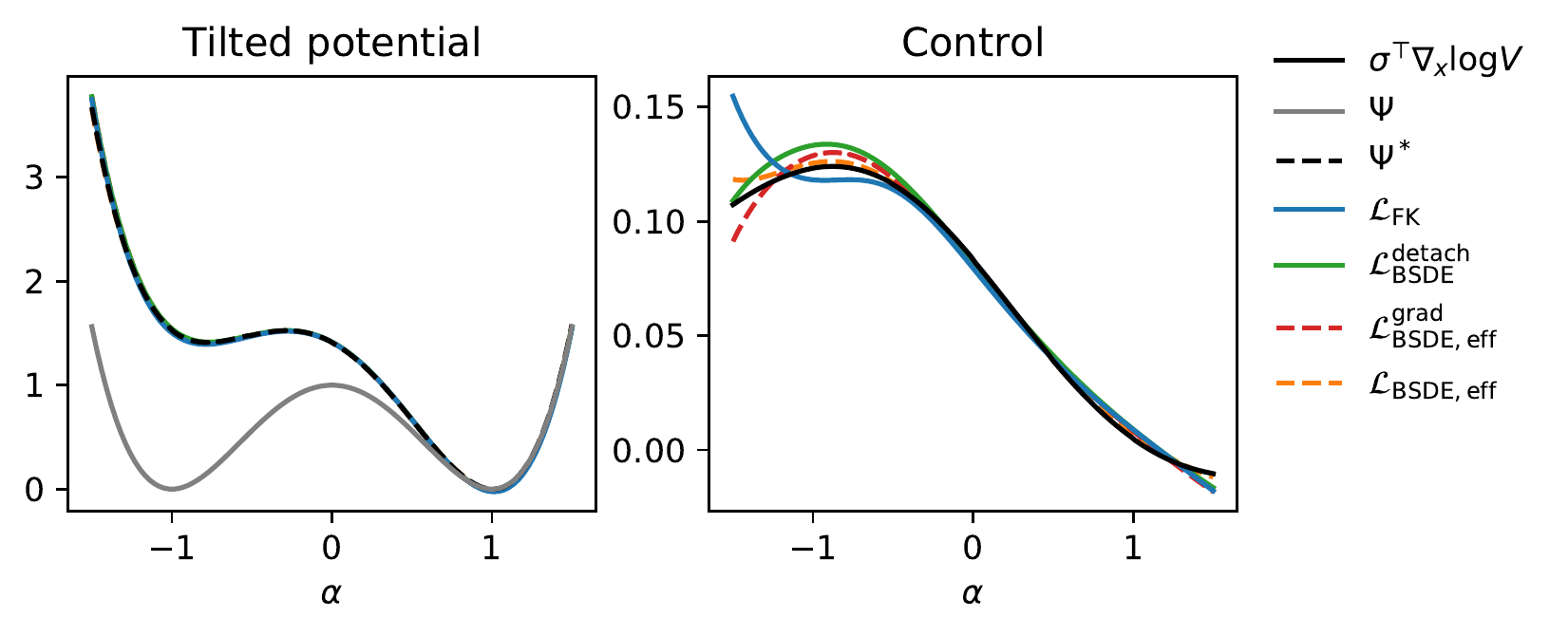}
  \vspace{-0.7cm}
  \caption{For a HJB equation with double well potential we display the tilted version of the potential $\Psi^*$ evaluated at $x = (\alpha, \dots, \alpha)^\top$ and $t=0.75$, which is approximated well by all our methods (after $30k$ gradient steps with batch size $1024$). However, the approximation of the first component of the control $\sigma^\top\nabla_x \log V$ depicted in the right panel is better for BSDE-based losses.}
  \label{fig: HJB double well}
  \vspace{-0.25cm}
\end{figure}

\section{Conclusion}
The most prominent method for solving general high-dimensional linear PDEs of Kolmogorov type is based on minimizing a variational formulation inspired by the Feynman-Kac formula over a class of neural networks. Building upon results on BSDEs and control variates for Monte Carlo estimators, we suggest an alternative formulation and prove that the corresponding loss estimator enjoys lower variance. Lending to the notion of Gateaux derivatives, we can further show that this leads to gradient estimators with lower variance, which is crucial for gradient-based optimization. Importantly, we also develop a novel, more efficient estimator that is adapted to current deep learning frameworks and show that resulting methods yield significant performance gains. It is up to future research to exploit such efficient estimators for boundary value problems, see Appendix~\ref{app: feynman-kac}, and non-linear PDEs, which can naturally be approached with BSDE-based losses, see Appendix~\ref{app: BSDEs}.

In summary, we suspect that the techniques used in this paper can be transferred to a number of related deep learning methods that rely on Monte Carlo sampling. Providing important theoretical guarantees and performance improvements, our work is a fundamental step in developing robust and reliable algorithms for the solution of high-dimensional PDEs. 

\section*{Acknowledgements}
The research of Lorenz Richter has been funded by Deutsche Forschungsgemeinschaft (DFG) through the grant CRC 1114 \enquote{Scaling Cascades in Complex Systems} (project A05, project number 235221301). The research of Julius Berner was supported by the Austrian Science Fund (FWF) under grant I3403-N32. The computational results have been achieved in part using the Vienna Scientific Cluster (VSC).

\bibliography{references}

\begin{thebibliography}{50}
\providecommand{\natexlab}[1]{#1}
\providecommand{\url}[1]{\texttt{#1}}
\expandafter\ifx\csname urlstyle\endcsname\relax
  \providecommand{\doi}[1]{doi: #1}\else
  \providecommand{\doi}{doi: \begingroup \urlstyle{rm}\Url}\fi

\bibitem[Ash \& Dol{\'e}ans-Dade(2000)Ash and
  Dol{\'e}ans-Dade]{ash2000probability}
Ash, R.~B. and Dol{\'e}ans-Dade, C.
\newblock \emph{Probability and measure theory}.
\newblock Academic press, 2000.

\bibitem[Baldi(2017)]{baldi2017stochastic}
Baldi, P.
\newblock \emph{Stochastic Calculus: An Introduction Through Theory and
  Exercises}.
\newblock Universitext. Springer International Publishing, 2017.

\bibitem[Beck et~al.(2019)Beck, E, and Jentzen]{beck2019machine}
Beck, C., E, W., and Jentzen, A.
\newblock Machine learning approximation algorithms for high-dimensional fully
  nonlinear partial differential equations and second-order backward stochastic
  differential equations.
\newblock \emph{Journal of Nonlinear Science}, 29\penalty0 (4):\penalty0
  1563--1619, 2019.

\bibitem[Beck et~al.(2021)Beck, Becker, Grohs, Jaafari, and
  Jentzen]{beck2021solving}
Beck, C., Becker, S., Grohs, P., Jaafari, N., and Jentzen, A.
\newblock Solving the {K}olmogorov {PDE} by means of deep learning.
\newblock \emph{Journal of Scientific Computing}, 88\penalty0 (3):\penalty0
  1--28, 2021.

\bibitem[Berner et~al.(2019)Berner, Elbr{\"a}chter, Grohs, and
  Jentzen]{berner2019towards}
Berner, J., Elbr{\"a}chter, D., Grohs, P., and Jentzen, A.
\newblock Towards a regularity theory for {ReLU} networks--chain rule and
  global error estimates.
\newblock In \emph{13th International conference on Sampling Theory and
  Applications}, pp.\  1--5. IEEE, 2019.

\bibitem[Berner et~al.(2020{\natexlab{a}})Berner, Dablander, and
  Grohs]{berner2020numerically}
Berner, J., Dablander, M., and Grohs, P.
\newblock Numerically solving parametric families of high-dimensional
  {Kolmogorov} partial differential equations via deep learning.
\newblock In \emph{Advances in Neural Information Processing Systems}, pp.\
  16615--16627, 2020{\natexlab{a}}.

\bibitem[Berner et~al.(2020{\natexlab{b}})Berner, Grohs, and
  Jentzen]{berner2020analysis}
Berner, J., Grohs, P., and Jentzen, A.
\newblock Analysis of the generalization error: Empirical risk minimization
  over deep artificial neural networks overcomes the curse of dimensionality in
  the numerical approximation of {B}lack--{S}choles partial differential
  equations.
\newblock \emph{SIAM Journal on Mathematics of Data Science}, 2\penalty0
  (3):\penalty0 631--657, 2020{\natexlab{b}}.

\bibitem[Berner et~al.(2021)Berner, Grohs, Kutyniok, and
  Petersen]{berner2021modern}
Berner, J., Grohs, P., Kutyniok, G., and Petersen, P.
\newblock The modern mathematics of deep learning.
\newblock \emph{arXiv preprint arXiv:2105.04026}, 2021.

\bibitem[Bismut(1973)]{bismut1973conjugate}
Bismut, J.-M.
\newblock Conjugate convex functions in optimal stochastic control.
\newblock \emph{Journal of Mathematical Analysis and Applications}, 44\penalty0
  (2):\penalty0 384--404, 1973.

\bibitem[Black \& Scholes(1973)Black and Scholes]{black1973pricing}
Black, F. and Scholes, M.
\newblock The pricing of options and corporate liabilities.
\newblock \emph{Journal of political economy}, 81\penalty0 (3):\penalty0
  637--654, 1973.

\bibitem[Da~Prato \& Zabczyk(2014)Da~Prato and Zabczyk]{da2014stochastic}
Da~Prato, G. and Zabczyk, J.
\newblock \emph{Stochastic equations in infinite dimensions}.
\newblock Cambridge university press, 2014.

\bibitem[E et~al.(2017)E, Han, and Jentzen]{weinan2017deep}
E, W., Han, J., and Jentzen, A.
\newblock Deep learning-based numerical methods for high-dimensional parabolic
  partial differential equations and backward stochastic differential
  equations.
\newblock \emph{Communications in Mathematics and Statistics}, 5\penalty0
  (4):\penalty0 349--380, 2017.

\bibitem[Ekström \& Tysk(2010)Ekström and Tysk]{EKSTROM2010498}
Ekström, E. and Tysk, J.
\newblock The black–scholes equation in stochastic volatility models.
\newblock \emph{Journal of Mathematical Analysis and Applications},
  368\penalty0 (2):\penalty0 498 -- 507, 2010.

\bibitem[Evans(2010)]{evans2010}
Evans, L.~C.
\newblock \emph{Partial differential equations}.
\newblock American Mathematical Society, Providence, R.I., 2010.

\bibitem[Gall(2016)]{gall2016}
Gall, J.
\newblock \emph{Brownian Motion, Martingales, and Stochastic Calculus}.
\newblock Graduate Texts in Mathematics. Springer International Publishing,
  2016.

\bibitem[Gobet(2016)]{gobet2016monte}
Gobet, E.
\newblock \emph{{M}onte-{C}arlo methods and stochastic processes: from linear
  to non-linear}.
\newblock CRC Press, 2016.

\bibitem[Grohs \& Herrmann(2020)Grohs and Herrmann]{grohs2020deep}
Grohs, P. and Herrmann, L.
\newblock Deep neural network approximation for high-dimensional elliptic
  {PDE}s with boundary conditions.
\newblock \emph{arXiv preprint arXiv:2007.05384}, 2020.

\bibitem[Hairer et~al.(2015)Hairer, Hutzenthaler, and Jentzen]{hairer2015loss}
Hairer, M., Hutzenthaler, M., and Jentzen, A.
\newblock Loss of regularity for {K}olmogorov equations.
\newblock \emph{The Annals of Probability}, 43\penalty0 (2):\penalty0 468--527,
  2015.

\bibitem[Han et~al.(2018)Han, Jentzen, and Weinan]{han2018solving}
Han, J., Jentzen, A., and Weinan, E.
\newblock Solving high-dimensional partial differential equations using deep
  learning.
\newblock \emph{Proceedings of the National Academy of Sciences}, 115\penalty0
  (34):\penalty0 8505--8510, 2018.

\bibitem[Hartmann \& Richter(2021)Hartmann and
  Richter]{hartmann2021nonasymptotic}
Hartmann, C. and Richter, L.
\newblock Nonasymptotic bounds for suboptimal importance sampling.
\newblock \emph{arXiv preprint arXiv:2102.09606}, 2021.

\bibitem[Hartmann et~al.(2017)Hartmann, Richter, Sch{\"u}tte, and
  Zhang]{hartmann2017variational}
Hartmann, C., Richter, L., Sch{\"u}tte, C., and Zhang, W.
\newblock Variational characterization of free energy: Theory and algorithms.
\newblock \emph{Entropy}, 19\penalty0 (11):\penalty0 626, 2017.

\bibitem[Hartmann et~al.(2019)Hartmann, Kebiri, Neureither, and
  Richter]{hartmann2019variational}
Hartmann, C., Kebiri, O., Neureither, L., and Richter, L.
\newblock Variational approach to rare event simulation using least-squares
  regression.
\newblock \emph{Chaos: An Interdisciplinary Journal of Nonlinear Science},
  29\penalty0 (6):\penalty0 063107, 2019.

\bibitem[Hendrycks \& Gimpel(2016)Hendrycks and Gimpel]{hendrycks2016gaussian}
Hendrycks, D. and Gimpel, K.
\newblock Gaussian error linear units ({GELUs}).
\newblock \emph{arXiv preprint arXiv:1606.08415}, 2016.

\bibitem[Huang et~al.(2021)Huang, Lim, and Courville]{huang2021variational}
Huang, C.-W., Lim, J.~H., and Courville, A.~C.
\newblock A variational perspective on diffusion-based generative models and
  score matching.
\newblock \emph{Advances in Neural Information Processing Systems}, 34, 2021.

\bibitem[Jentzen et~al.(2018)Jentzen, Salimova, and Welti]{jentzen2018proof}
Jentzen, A., Salimova, D., and Welti, T.
\newblock A proof that deep artificial neural networks overcome the curse of
  dimensionality in the numerical approximation of {K}olmogorov partial
  differential equations with constant diffusion and nonlinear drift
  coefficients.
\newblock \emph{arXiv preprint arXiv:1809.07321}, 2018.

\bibitem[Karatzas \& Shreve(1998)Karatzas and Shreve]{karatzas1998brownian}
Karatzas, I. and Shreve, S.~E.
\newblock \emph{Brownian Motion and Stochastic Calculus}.
\newblock Springer, 1998.

\bibitem[Kiumarsi et~al.(2017)Kiumarsi, Vamvoudakis, Modares, and
  Lewis]{kiumarsi2017optimal}
Kiumarsi, B., Vamvoudakis, K.~G., Modares, H., and Lewis, F.~L.
\newblock Optimal and autonomous control using reinforcement learning: A
  survey.
\newblock \emph{IEEE transactions on neural networks and learning systems},
  29\penalty0 (6):\penalty0 2042--2062, 2017.

\bibitem[Klenke(2013)]{klenke2013probability}
Klenke, A.
\newblock \emph{Probability Theory: A Comprehensive Course}.
\newblock Universitext. Springer London, 2013.

\bibitem[Kloeden \& Platen(1992)Kloeden and Platen]{kloeden1992stochastic}
Kloeden, P.~E. and Platen, E.
\newblock Stochastic differential equations.
\newblock In \emph{Numerical Solution of Stochastic Differential Equations},
  pp.\  103--160. Springer, 1992.

\bibitem[Lelievre \& Stoltz(2016)Lelievre and Stoltz]{lelievre2016partial}
Lelievre, T. and Stoltz, G.
\newblock Partial differential equations and stochastic methods in molecular
  dynamics.
\newblock \emph{Acta Numerica}, 25:\penalty0 681, 2016.

\bibitem[N{\"u}sken \& Richter(2021{\natexlab{a}})N{\"u}sken and
  Richter]{nusken2021interpolating}
N{\"u}sken, N. and Richter, L.
\newblock Interpolating between {BSDE}s and {PINN}s: deep learning for elliptic
  and parabolic boundary value problems.
\newblock \emph{arXiv preprint arXiv:2112.03749}, 2021{\natexlab{a}}.

\bibitem[N{\"u}sken \& Richter(2021{\natexlab{b}})N{\"u}sken and
  Richter]{nusken2021solving}
N{\"u}sken, N. and Richter, L.
\newblock Solving high-dimensional {H}amilton--{J}acobi--{B}ellman {PDE}s using
  neural networks: perspectives from the theory of controlled diffusions and
  measures on path space.
\newblock \emph{Partial Differential Equations and Applications}, 2\penalty0
  (4):\penalty0 1--48, 2021{\natexlab{b}}.

\bibitem[{\O}ksendal(2003)]{oksendal2003stochastic}
{\O}ksendal, B.
\newblock Stochastic differential equations.
\newblock In \emph{Stochastic differential equations}, pp.\  65--84. Springer,
  2003.

\bibitem[Pardoux(1998)]{pardoux1998backward}
Pardoux, {\'E}.
\newblock Backward stochastic differential equations and viscosity solutions of
  systems of semilinear parabolic and elliptic {PDE}s of second order.
\newblock In \emph{Stochastic Analysis and Related Topics VI}, pp.\  79--127.
  Springer, 1998.

\bibitem[Pardoux \& Peng(1990)Pardoux and Peng]{pardoux1990adapted}
Pardoux, E. and Peng, S.
\newblock Adapted solution of a backward stochastic differential equation.
\newblock \emph{Systems \& Control Letters}, 14\penalty0 (1):\penalty0 55--61,
  1990.

\bibitem[Pascucci(2005)]{pascucci2005kolmogorov}
Pascucci, A.
\newblock Kolmogorov equations in physics and in finance.
\newblock In \emph{Elliptic and parabolic problems}, pp.\  353--364. Springer,
  2005.

\bibitem[Pavliotis(2014)]{pavliotis2014stochastic}
Pavliotis, G.~A.
\newblock \emph{Stochastic processes and applications: diffusion processes, the
  {F}okker-Planck and {L}angevin equations}, volume~60.
\newblock Springer, 2014.

\bibitem[Pfau et~al.(2020)Pfau, Spencer, Matthews, and Foulkes]{pfau2020ab}
Pfau, D., Spencer, J.~S., Matthews, A.~G., and Foulkes, W. M.~C.
\newblock Ab initio solution of the many-electron {Schr{\"o}dinger} equation
  with deep neural networks.
\newblock \emph{Physical Review Research}, 2\penalty0 (3):\penalty0 033429,
  2020.

\bibitem[Pham(2009)]{pham2009continuous}
Pham, H.
\newblock \emph{Continuous-time stochastic control and optimization with
  financial applications}, volume~61.
\newblock Springer Science \& Business Media, 2009.

\bibitem[Pironneau \& Achdou(2009)Pironneau and Achdou]{pironneau09}
Pironneau, O. and Achdou, Y.
\newblock Partial differential equations for option pricing.
\newblock \emph{Handbook of Numerical Analysis}, 15:\penalty0 369--495, 2009.

\bibitem[Raissi et~al.(2019)Raissi, Perdikaris, and
  Karniadakis]{raissi2019physics}
Raissi, M., Perdikaris, P., and Karniadakis, G.~E.
\newblock Physics-informed neural networks: A deep learning framework for
  solving forward and inverse problems involving nonlinear partial differential
  equations.
\newblock \emph{Journal of Computational Physics}, 378:\penalty0 686--707,
  2019.

\bibitem[Reisinger \& Zhang(2019)Reisinger and Zhang]{reisinger2019rectified}
Reisinger, C. and Zhang, Y.
\newblock Rectified deep neural networks overcome the curse of dimensionality
  for nonsmooth value functions in zero-sum games of nonlinear stiff systems.
\newblock \emph{arXiv:1903.06652}, 2019.

\bibitem[Richter(2021)]{Richter2021}
Richter, L.
\newblock \emph{Solving high-dimensional {PDE}s, approximation of path space
  measures and importance sampling of diffusions}.
\newblock Ph{D} thesis, BTU Cottbus-Senftenberg, 2021.

\bibitem[Robert \& Casella(2004)Robert and Casella]{robert2004monte}
Robert, C.~P. and Casella, G.
\newblock \emph{Monte Carlo statistical methods}, volume~2.
\newblock Springer, 2004.

\bibitem[Sirignano \& Spiliopoulos(2018)Sirignano and
  Spiliopoulos]{sirignano2018dgm}
Sirignano, J. and Spiliopoulos, K.
\newblock {DGM}: A deep learning algorithm for solving partial differential
  equations.
\newblock \emph{Journal of Computational Physics}, 375:\penalty0 1339--1364,
  2018.

\bibitem[Theodorou et~al.(2010)Theodorou, Buchli, and
  Schaal]{theodorou2010generalized}
Theodorou, E., Buchli, J., and Schaal, S.
\newblock A generalized path integral control approach to reinforcement
  learning.
\newblock \emph{The Journal of Machine Learning Research}, 11:\penalty0
  3137--3181, 2010.

\bibitem[Vidales et~al.(2018)Vidales, Siska, and Szpruch]{vidales2018unbiased}
Vidales, M.~S., Siska, D., and Szpruch, L.
\newblock Unbiased deep solvers for parametric {PDE}s.
\newblock \emph{arXiv preprint arXiv:1810.05094}, 2018.

\bibitem[Widder(1976)]{widder1976heat}
Widder, D.~V.
\newblock \emph{The heat equation}, volume~67.
\newblock Academic Press, 1976.

\bibitem[Zhang(2017)]{zhang2017backward}
Zhang, J.
\newblock \emph{Backward stochastic differential equations}.
\newblock Springer, 2017.

\bibitem[Zhou et~al.(2021)Zhou, Han, and Lu]{zhou2021actor}
Zhou, M., Han, J., and Lu, J.
\newblock Actor-critic method for high dimensional static
  {Hamilton--Jacobi--Bellman} partial differential equations based on neural
  networks.
\newblock \emph{arXiv preprint arXiv:2102.11379}, 2021.

\end{thebibliography}
\bibliographystyle{icml2022}

\clearpage
\newpage

\appendix
\section{Notation}
\label{app: notation}
We write $C^{2, 1}(\R^d \times [0, T], \R)$ for the space of functions $(x, t) \mapsto f(x,t)$ which are twice continuously differentiable in the spatial coordinate $x\in\R^d$ and once continuously differentiable in the time coordinate $t\in[0,T]$. We further denote the partial derivatives (w.r.t.\@ to the $i$-th spatial coordinate $x_i$ and time coordinate $t$) and the gradient of $f$ (w.r.t.\@ to the spatial coordinate $x$) by $\partial_{x_i} f$, $\partial_t f$, and $\nabla_x f = ( \partial_{x_1} f , \dots, \partial_{x_d} f)^\top $, respectively.
We say that functions $f\in C(\R^d \times [0, T], \R)$ and $g\in C(\R^d, \R)$ are \emph{at most polynomially growing} if there exist constants $c,\lambda \in\R_{>0}$ such that for every $(x,t) \in \mathbb{R}^d \times [0,T]$ it holds that
\begin{equation}
    |g(x)| \le c(1+ \|x\|^{\lambda}), \quad |f(x,t)| \le c(1+ \|x\|^{\lambda}).
\end{equation}
Finally, we write $\|\cdot \|$ for the Euclidean norm and $\|\cdot\|_F$ for the Frobenius norm.
\section{Assumptions and Proofs}
\label{app: proofs}
We usually operate under the following assumption which guarantees a unique strong solution to the SDE in~\eqref{eq:SDE} as well as the existence of a unique strong solution $V\in C^{2,1}(\R^d \times [0, T], \R)$ to the PDE in~\eqref{eq:PDE} which can be expressed as $V(x, t) = \E[g(X_T) | X_t = x]$ as, for instance, stated in \eqref{eq: FK contitional expectation}, see~\citet[Theorem 10.6]{baldi2017stochastic}, \citet[Remark 5.7.8]{karatzas1998brownian},~\citet[Theorem 5.2.1]{oksendal2003stochastic}, and~\citet[Theorem 2.1]{pavliotis2014stochastic}.
\begin{assumption}[Conditions on $b$, $\sigma$, and $g$]
\label{as: conditions on b sigma g}
We assume that $g \in C^{2}(\R^d, \R)$ is an at most polynomially growing function. Further, we assume that the coefficient functions $\sigma \in C(\R^d \times [0, T], \R^{d\times d})$ and $b \in C(\R^d \times [0, T], \R^d)$ satisfy the following properties:
There exist constants $c_1, c_2, c_3, c_4 \in\R_{>0}$ such that
\begin{align*}
&\| b(x,t) \| + \|\sigma\sigma^\top(x,t)\|_F \le c_1, \ \  \text{(boundedness)} \\
&\|\sigma(x,t) - \sigma(y,t)\|_F \le c_2\|x-y\|, \ \ \text{(Lipschitz continuity)} \\
&\|b(x,t) - b(y,t)\| \le c_3\|x-y\|, \ \ \text{(Lipschitz continuity)} \\
&\eta \cdot (\sigma\sigma^\top)(x,t) \eta \ge c_4 \| \eta \|^2, \ \ \text{(uniform ellipticity)}
\end{align*}
for all $x, y, \eta \in \mathbb{R}^d$ and $t \in [0,T]$.
\end{assumption}
Throughout the proofs, we denote by $\Delta^{(k)}_u$ and $S_u^{(k)}$ the quantities in~\eqref{eq: definition Delta_u} and \eqref{eq: definition S_u} evaluated at the $k$-th sample. This corresponds to replacing $(\xi, \tau, W)$ in~\eqref{eq:SDE},~\eqref{eq: definition Delta_u}, and~\eqref{eq: definition S_u} by the $k$-th sample $(\xi^{(k)},\tau^{(k)}, W^{(k)})$.

\begin{proof}[Proof of \Cref{prop:opt}]
We will use the fact that $\Delta_V = S_V$ almost surely, see \Cref{lem: zero variance}. This allows us to rewrite 
\begin{align*}
    \mathcal{L}_\mathrm{BSDE}(u) &= \E\left[\left(\Delta_u + V(\xi,\tau) - V(\xi,\tau) - S_u \right)^2 \right] \\ 
    &=\E\left[\left(V(\xi,\tau) - u(\xi,\tau) + S_V - S_u \right)^2 \right] \\
    &=\E\left[\left(V(\xi,\tau) - u(\xi,\tau) \right)^2 \right] + \E\left[S_{V -u}^2 \right].
\end{align*}
In the last step we used the tower property of the conditional expectation and the vanishing expectation of the stochastic integrals $S_V$ and $S_u$ to show that the cross-term has zero expectation, namely
\begin{align*}
   \E&\left[\left(V(\xi,\tau)- u(\xi,\tau) \right) (S_V - S_u)\right] \\ 
    &=\E\left[\left(V(\xi,\tau)- u(\xi,\tau) \right)\E\left[ S_V - S_u | (\xi,\tau)\right]\right]=0.
\end{align*}
For the loss $\mathcal{L}_\mathrm{FK}$ we can similarly write 
\begin{subequations}
\begin{align}
    \mathcal{L}_\mathrm{FK}(u) &= \E\left[\left(V(\xi,\tau)- u(\xi,\tau) + \Delta_V \right)^2 \right] \\ 
    &=\E\left[\left(V(\xi,\tau) - u(\xi,\tau) \right)^2 \right] + \E\left[ \Delta^2_V \right],
\end{align}
\end{subequations}
where the cross-term has again zero expectation. More precisely, the tower property of the conditional expectation and \Cref{lem: zero variance} imply that
\begin{subequations}
\begin{align}
  \E&\left[\left(V(\xi,\tau)- u(\xi,\tau) \right) \Delta_V \right] \\ 
    &=\E\left[\left(V(\xi,\tau)- u(\xi,\tau) \right)\E\left[ \Delta_V | (\xi,\tau)\right]\right]
    \\ 
    &=\E\left[\left(V(\xi,\tau)- u(\xi,\tau) \right)\E\left[ S_V | (\xi,\tau)\right]\right]=0.
\end{align}
\end{subequations}
This proves Proposition~\ref{prop:opt}.
\end{proof}

\begin{proof}[Proof of \Cref{prop: variance of losses}]
Let us recall \Cref{lem: zero variance} which states that $\Delta_V = S_V$ almost surely. With the mutual independence of $(\Delta^{(k)}_u)_{k=1}^K$ and $(S_u^{(k)})_{k=1}^K$ we now readily obtain that
\begin{align}
    \E\left[ \mathcal{L}^{(K)}_{\mathrm{BSDE}}(V) \right]  = \E\left[\left(\Delta_V - S_V\right)^2\right]=0,
\end{align}
and
\begin{align}
   \V\left[ \mathcal{L}^{(K)}_{\mathrm{BSDE}}(V) \right] = \frac{1}{K} \V\left[\left(\Delta_V - S_V\right)^2\right]=0.
\end{align}
For the loss $\mathcal{L}_{\mathrm{FK}}$, it holds that
\begin{align}
    \E\left[ \mathcal{L}^{(K)}_{\mathrm{FK}}(V) \right] = \E\left[\Delta_V^2\right] = \V\left[S_V\right]
\end{align}
and
\begin{align}
\label{eq: var_fk}
    \V\left[ \mathcal{L}^{(K)}_{\mathrm{FK}}(V) \right] = \frac{1}{K} \V\left[\Delta_V^2\right]=\frac{1}{K} \V\left[S_V^2\right],
\end{align}
which proves the claim.
\end{proof}

\begin{proof}[Proof of Proposition~\ref{prop:robust}]
(i) It holds that
\begin{subequations}
\begin{align}
    \frac{\delta}{\delta u} \mathcal{L}_\mathrm{FK}^{(K)}(u; \phi) &= \frac{\mathrm d}{\mathrm d \varepsilon}\Big|_{\varepsilon=0} \mathcal{L}^{(K)}_\mathrm{FK}(u+ \varepsilon \phi) \\
    &=\frac{\mathrm d}{\mathrm d \varepsilon}\Big|_{\varepsilon=0}\frac{1}{K}\sum_{k=1}^K\left(\Delta^{(k)}_{u + \varepsilon \phi} \right)^2 \\
    &= - \frac{2}{K} \sum_{k=1}^K\Delta^{(k)}_u  \phi(\xi^{(k)}, \tau^{(k)}).
\end{align}
\end{subequations}
Using the mutual independence of our samples this implies that
\begin{equation}
\label{eq:var_fk}
    \V\left[\frac{\delta}{\delta u}\mathcal{L}^{(K)}_\mathrm{FK}(u; \phi)\right] = \frac{4}{K} \V\left[\Delta_u \phi(\xi,\tau)   \right].
\end{equation} 
Now, setting $u= V$, \Cref{lem: zero variance}
shows that the variance of the Gateaux derivative at $V$ satisfies
\begin{equation}
   \V\left[\frac{\delta}{\delta u}{\Big|}_{u= V}\mathcal{L}^{(K)}_\mathrm{FK}(u; \phi)\right] =   \frac{4}{K} \V\left[S_V \phi(\xi,\tau)   \right].
\end{equation}
By the It\^{o} isometry, this evaluates to
\begin{equation}
    \frac{4}{K}\E\left[\left(\int_\tau^T \|\sigma^\top \nabla_x V(X_s, s) \|^2\mathrm ds \right) \phi^2(\xi,\tau)   \right],
\end{equation}
which, in general, is non-zero for an arbitrary $\phi \in \mathcal{U}$.\par\bigskip

(ii) By definition, we have that
\begin{equation}
    \mathcal{L}^{(K)}_\mathrm{BSDE}(u+ \varepsilon \phi)= \frac{1}{K}\sum_{k=1}^K\left(\Delta^{(k)}_{u+\varepsilon\phi}  - S^{(k)}_{u+\varepsilon \phi} \right)^2.
\end{equation}
The Gateaux derivative 
\begin{align}
    \frac{\delta}{\delta u} \mathcal{L}^{(K)}_\mathrm{BSDE}(u; \phi) &= \frac{\mathrm d}{\mathrm d \varepsilon}\Big|_{\varepsilon=0} \mathcal{L}^{(K)}_\mathrm{BSDE}(u+ \varepsilon \phi) 
\end{align}
thus evaluates to
\begin{equation}
\label{eq:der_bsde}
    - \frac{2}{K} \sum_{k=1}^K\left(\Delta^{(k)}_u - S^{(k)}_u \right) \left( \phi(\xi^{(k)}, \tau^{(k)}) + S^{(k)}_\phi \right).
\end{equation}
Now, setting $u=V$, \Cref{lem: zero variance} readily implies that 
\begin{equation}
    \frac{\delta}{\delta u}{\Big|}_{u= V}\mathcal{L}^{(K)}_\mathrm{BSDE}(u; \phi) = 0
\end{equation}
almost surely for all $\phi \in \mathcal{U}$. This shows that
\begin{equation}
        \V\left[\frac{\delta}{\delta u}{\Big|}_{u= V}\mathcal{L}^{(K)}_\mathrm{BSDE}(u; \phi)\right] = 0,
\end{equation}
which proves the claim. Note that the above calculation shows that the derivative of any squared loss exhibits zero variance whenever the error is vanishing almost surely.
\end{proof}

\begin{proof}[Proof of Proposition~\ref{prop:robust_around}]
Let us define
\begin{equation}
    \delta(x, t) = u(x, t) - V(x, t).
\end{equation}
Analogously to~\eqref{eq:der_bsde} we can compute
\begin{align}
\begin{split}
    \V&\left[\frac{\delta}{\delta u}{\Big|}_{u= V+\delta}\mathcal{L}^{(K)}_\mathrm{BSDE}(u; \phi)\right] \\
    &\qquad=\frac{4}{K} \V \left[ \left(\delta(\xi,\tau) + S_{\delta} \right)\left(\phi(\xi,\tau) + S_{\phi} \right)\right].
  \end{split}
\end{align}
Now, properties of the variance as well as the Cauchy-Schwarz inequality yield
\begin{align*}
    \V& \left[ \left(\delta(\xi,\tau) + S_{\delta} \right)\left(\phi(\xi,\tau) + S_{\phi} \right)\right] \\
    &\qquad \le  \E\left[\left(\delta(\xi,\tau) + S_{\delta} \right)^2\left(\phi(\xi,\tau) + S_{\phi} \right)^2\right] \\
    &\qquad \le  \E\left[\left(\delta(\xi,\tau) + S_{\delta} \right)^4\right]^{\frac{1}{2}}\E\left[\left(\phi(\xi,\tau) + S_{\phi} \right)^4\right]^{\frac{1}{2}}.
\end{align*}
Each of the above factors can be bounded by using the Burkholder-Davis-Gundy inequality \citep[Section 4.6]{da2014stochastic}, Hölder's inequality, and the consistency of the Euclidean norm, i.e.
\begin{align*}
    &\E\left[\left(\delta(\xi,\tau) + S_{\delta} \right)^4\right] \le  8\left( \E\left[\delta(\xi,\tau)^4 \right]+ \E\left[S^4_{\delta} \right]\right) \\
    &\le  8\Big( \varepsilon^4 + 36T\int_0^T \! \E\left[ \| (\sigma^\top \nabla_x\delta )(X_s,s) \|^4 \right] \mathrm d s\Big) \\
   &\le  8\varepsilon^4\Big( 1 + 36T\int_0^T \! \E\left[ \| \sigma(X_s, s) \|_F^4(1+ \|X_s\|^\gamma)^4 \right] \mathrm d s\Big).
\end{align*}
Defining
\begin{equation*}
    C\coloneqq 32\Big( 1 + 36T\int_0^T \! \E\left[\| \sigma(X_s, s) \|_F^4(1+ \|X_s\|^\gamma)^4 \right] \mathrm d s\Big),
\end{equation*}
we thus showed that
\begin{align}
    \V&\left[\frac{\delta}{\delta u}{\Big|}_{u= V+\delta}\mathcal{L}^{(K)}_\mathrm{BSDE}(u; \phi)\right] \le \frac{C\varepsilon^2 \kappa^2}{K},
\end{align}
which proves the claim.
\end{proof}

\section{Feynman-Kac Theorem and More General Linear PDEs}
\label{app: feynman-kac}

As outlined in \Cref{sec: main section} and for instance stated in \eqref{eq:fk} as well as \eqref{eq: FK contitional expectation}, the Feynman-Kac formula brings a stochastic representation of the linear PDE in \eqref{eq:PDE} via
\begin{subequations}
\begin{align}
    V(x, t) &= \E[g(X_T) | X_t = x] \\
    &=\E[g(X_T) | (\xi, \tau)=(x,t)] .
\end{align}
\end{subequations}

Note that this can be shown using the identity in~\eqref{eq:ito}. More specifically, the stochastic integral $S_V$ has a vanishing expectation conditioned on $(\xi,\tau)$ and we obtain that
\begin{equation}
\label{eq: Feynman-Kac equation compact notation}
    \E[\Delta_V - S_V |(\xi,\tau)] = \E[ \Delta_V|(\xi,\tau)] = 0.
\end{equation}

Let us make this observation precise and at the same time state a slightly more general version of the Feynman-Kac theorem.

\begin{theorem}[Feynman-Kac formula]
\label{thm: general Feynman-Kac}
Let $g \in C^2(\R^d,\R)$, $k\in C(\R^d \times [0, T], \R)$, and $V \in C^{2, 1}(\R^d \times [0, T], \R)$ be at most polynomially growing functions. Further, let $f \in C(\R^d \times [0, T], \R_{\ge 0})$ and assume that $V$ solves the parabolic problem
\begin{align}
\label{eq: general Feynman-Kac BVP}
    (\partial_t + L - f(x, t)) V(x, t) + k(x, t) &= 0
\end{align}
on $(x, t) \in \R^d \times [0, T)$ with terminal condition
\begin{align}
\label{eq: Feynman-Kac terminal condition}
    V(x, T) &= g(x), \quad\,\,\, x \in \R^d.
\end{align}
Then
\begin{align}
\label{eq: general Feynman-Kac formula}
\begin{split}
    V(x, t) = \E\Bigg[&\int_t^T e^{-\int_t^r f(X_s, s) \mathrm ds } k(X_r, r) \mathrm dr \\
    &\,\,\, + e^{-\int_t^T f(X_s, s)\mathrm ds} g(X_T) \Bigg| X_t = x\Bigg],
\end{split}
\end{align}
where $X$ is a strong solution to \eqref{eq:SDE}. 
\end{theorem}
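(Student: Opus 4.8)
The plan is to prove the Feynman-Kac formula~\eqref{eq: general Feynman-Kac formula} by applying It\^o's lemma to a suitably chosen auxiliary process that absorbs both the discounting factor and the running-cost integral, so that its increment becomes a pure stochastic-integral term once we invoke the PDE~\eqref{eq: general Feynman-Kac BVP}. This generalizes the argument used in \Cref{lem: zero variance}, where the absence of the $f$ and $k$ terms made the drift vanish directly. First, I would fix $(x,t)$ and consider the process $X$ solving~\eqref{eq:SDE} with $X_t = x$. Define the discount factor $D_r \coloneqq e^{-\int_t^r f(X_s,s)\,\mathrm ds}$ and introduce the auxiliary process
\begin{equation*}
    M_r \coloneqq D_r\, V(X_r, r) + \int_t^r D_\rho\, k(X_\rho, \rho)\, \mathrm d\rho,
\end{equation*}
which is designed so that $M_t = V(x,t)$ and $M_T$ equals the bracketed quantity in~\eqref{eq: general Feynman-Kac formula} (using $V(X_T,T)=g(X_T)$). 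The goal is to show $M$ is a martingale, whence $V(x,t) = M_t = \E[M_T \mid X_t = x]$.

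Next I would compute $\mathrm dM_r$ via It\^o's lemma. Since $D_r$ has finite variation with $\mathrm dD_r = -f(X_r,r) D_r\,\mathrm dr$, the product rule combined with It\^o applied to $V(X_r,r)$ gives, after collecting the bounded-variation terms,
\begin{equation*}
    \mathrm dM_r = D_r\Big[(\partial_t + L - f)V(X_r,r) + k(X_r,r)\Big]\mathrm dr + D_r\, (\sigma^\top \nabla_x V)(X_r, r)\cdot \mathrm dW_r.
\end{equation*}
The crucial step is that the drift bracket is exactly the left-hand side of the PDE~\eqref{eq: general Feynman-Kac BVP}, so it vanishes identically along the trajectory, leaving only the stochastic-integral term. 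Thus $M_r = V(x,t) + \int_t^r D_\rho\,(\sigma^\top\nabla_x V)(X_\rho,\rho)\cdot \mathrm dW_\rho$ is a local martingale.

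The main obstacle is upgrading this local martingale to a true martingale so that the conditional expectation is valid and zero-mean; this is where the regularity and growth hypotheses enter. Concretely, I would verify that $\E\big[\int_t^T \|D_\rho(\sigma^\top\nabla_x V)(X_\rho,\rho)\|^2\,\mathrm d\rho\big] < \infty$. Here $D_\rho \le 1$ because $f \ge 0$ (this is precisely why the sign condition $f \in C(\R^d\times[0,T],\R_{\ge 0})$ is assumed), $\sigma$ is bounded by \Cref{as: conditions on b sigma g}, and $\nabla_x V$ is at most polynomially growing since $V$ is; combined with the standard moment bounds $\sup_{\rho\in[t,T]}\E[\|X_\rho\|^p]<\infty$ for the SDE solution under the boundedness and Lipschitz assumptions, this square-integrability follows from H\"older's inequality. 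Once the It\^o integral is a genuine martingale with zero expectation, taking conditional expectations given $X_t = x$ in the expression for $M_T$ yields
\begin{equation*}
    \E[M_T \mid X_t = x] = V(x,t) + \E\Big[\int_t^T D_\rho (\sigma^\top\nabla_x V)(X_\rho,\rho)\cdot \mathrm dW_\rho \,\Big|\, X_t = x\Big] = V(x,t),
\end{equation*}
which is exactly~\eqref{eq: general Feynman-Kac formula} after unfolding the definitions of $M_T$ and $D_T$. I would remark that the special case $f = k = 0$ recovers the identity~\eqref{eq:fk} and the statement of \Cref{lem: zero variance}.
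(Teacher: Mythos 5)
Your overall strategy is exactly the standard one: the paper itself does not prove \Cref{thm: general Feynman-Kac} but defers to \citet[Theorem 5.7.6]{karatzas1998brownian} and \citet[Theorem 10.5]{baldi2017stochastic}, and the argument in those references is essentially the one you outline --- apply It\^o's product rule to $M_r = D_r V(X_r,r) + \int_t^r D_\rho\, k(X_\rho,\rho)\,\mathrm d\rho$, observe that the drift term is $D_r\big[(\partial_t+L-f)V+k\big](X_r,r)\,\mathrm dr$, which vanishes by the PDE in \eqref{eq: general Feynman-Kac BVP}, and conclude from the vanishing expectation of the remaining stochastic integral. Your identification of $M_t=V(x,t)$ and of $M_T$ with the bracket in \eqref{eq: general Feynman-Kac formula}, and your observation that $f\ge 0$ is what gives $D_\rho\le 1$, are all correct.

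The one step that does not go through as written is your upgrade from local martingale to true martingale: you assert that $\nabla_x V$ is at most polynomially growing \emph{because} $V$ is. That implication is false in general --- polynomial growth of a $C^{2,1}$ function places no restriction on the growth of its gradient (already in one dimension, $x\mapsto \sin(e^{x})$ is bounded while its derivative grows exponentially) --- and the theorem's hypotheses only impose growth conditions on $V$, $g$, and $k$, not on $\nabla_x V$. The cited references close this gap by localization: apply It\^o's formula on $[t,\mathcal{T}_n\wedge T]$, where $\mathcal{T}_n$ is the exit time of $X$ from the ball of radius $n$, so that $\nabla_x V$ is bounded on the relevant compact set and the stopped stochastic integral is a genuine zero-mean martingale; then let $n\to\infty$ and pass to the limit in $\E[M_{\mathcal{T}_n\wedge T}]$ by dominated convergence, which is precisely where the polynomial growth of $V$, $g$, $k$ together with the moment bounds $\E[\sup_{s\le T}\|X_s\|^p]<\infty$ (available under \Cref{as: conditions on b sigma g}) enter. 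With that replacement your argument is complete; alternatively, one can assume polynomial growth of $\nabla_x V$ explicitly, which yields a slightly less general but commonly stated variant of the theorem and makes your direct square-integrability check valid.
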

\begin{proof}
The proof, whose main ingredient is It\^{o}'s Lemma, can, for instance, be found in \citet[Theorem 5.7.6]{karatzas1998brownian} and \citet[Theorem 10.5]{baldi2017stochastic}.
\end{proof}
Let us recall that our loss $\mathcal{L}_\mathrm{FK}$ as defined in \eqref{eq: FK loss} readily follows from the Feynman-Kac formula -- see also \citet{beck2021solving}. Note that a crucial point in the derivation of the Feynman-Kac formula is the elimination of the stochastic integral $S_V$, as defined in \eqref{eq: definition S_u}, via its martingale property, see \eqref{eq: Feynman-Kac equation compact notation}. Ironically, as elaborated on in \Cref{sec: main section}, it turns out that this elimination is just the reason for larger variances of estimators of $\mathcal{L}_\mathrm{FK}$.

\begin{remark}[Initial value problems]
Note that time can be reversed and initial value problems (as opposed to terminal value problems as in~\Cref{thm: general Feynman-Kac}) can be formulated. As an example, we can consider the time-homogeneous case, where $b, \sigma, f$, and $k$ do not depend on time. Let $V \in C^{2, 1}(\R^d \times [0, T], \R)$ solve the parabolic PDE
\begin{align}
     (\partial_t - L + f(x)) V(x, t) - k(x) = 0
\end{align}
on $(x, t) \in \R^d \times (0, T]$ with initial condition
\begin{align}
    V(x, 0) &= g(x), \quad\,\,\, x \in \R^d.
\end{align}
The associated stochastic representation is then given by
\begin{align}
\begin{split}
V(x, t) = \E\Bigg[&\int_0^t e^{-\int_0^r f(X_s) \mathrm ds } k(X_r) \mathrm dr \\
    & +  e^{-\int_0^t f(X_s)\mathrm ds} g(X_t) \Bigg| X_0 = x\Bigg].
\end{split}
\end{align}
\end{remark}

\begin{remark}[Bounded domains]
\label{rem: Feynman-Kac on bounded domain}
We can restrict ourselves to open, bounded domains $\mathcal{D} \subset \R^d$ and consider the parabolic PDE in \eqref{eq: general Feynman-Kac BVP} on $\mathcal{D}$, adding the additional boundary condition\footnote{One can also consider boundary conditions that are different from the terminal condition in \eqref{eq: Feynman-Kac terminal condition}, see, for instance,~\citet[Theorem 10.4]{baldi2017stochastic} and~\citet[Proposition 6.1]{lelievre2016partial}.} $V(x, t) = g(x)$ for $(x, t) \in \partial \mathcal{D}\times [0,T]$. The stochastic representation then becomes
\begin{align*}
    V(x, t) = \E\Bigg[&\int_t^{\mathcal{T} \wedge T} e^{-\int_t^r f(X_s, s) \mathrm ds } k(X_r, r) \mathrm dr \\
    &\,\,\, +   e^{-\int_t^{\mathcal{T} \wedge T} f(X_s, s)\mathrm ds} g(X_{\mathcal{T} \wedge T}) \Bigg| X_t = x\Bigg],
\end{align*}
where $T \wedge \mathcal{T} \coloneqq \min\{ T, \mathcal{T}\}$ and $\mathcal{T} \coloneqq \inf \{t \ge 0: X_t \notin \mathcal{D}\}$ is the first exit time of the stochastic process from the domain $\mathcal{D}$, for which we usually assume $\mathcal{T} < \infty$ almost surely. Likewise, we can consider the elliptic boundary value problem
\begin{subequations}
\begin{align}
    (L - f(x)) V(x) + k(x) &= 0, \quad &  x \in \mathcal{D}, \\
    V(x) &= g(x), \quad & x \in \partial \mathcal{D},
\end{align}
\end{subequations}
where now the solution $V$, the coefficients $b$ and $\sigma$ in the SDE \eqref{eq:SDE}, as well as $f$ and $k$ do not depend explicitly on time anymore, yielding the stochastic representation
\begin{align}
\begin{split}
\label{eq:bvp_2}
    V(x) = \E\Bigg[&\int_0^{\mathcal{T}} e^{-\int_0^r f(X_s) \mathrm ds } k(X_r) \mathrm dr \\
    &\,\,\, + e^{-\int_0^{\mathcal{T}} f(X_s)\mathrm ds} g(X_\mathcal{T}) \Bigg| X_0 = x\Bigg],
\end{split}
\end{align}
again with $\mathcal{T} \coloneqq \inf \{t \ge 0: X_t \notin \mathcal{D}\}$, see e.g. Proposition 5.7.2 in \citet{karatzas1998brownian}.

Leveraging the above representations, our proposed methods can readily be applied to a range of elliptic and parabolic PDEs on bounded domains. Note, however, that one needs to take into account the hitting times $\mathcal{T}$, for instance, using naive stopping criteria~\cite{nusken2021interpolating} or more elaborate walk-on-the-sphere algorithms~\cite{grohs2020deep}. As this might obscure the comparisons between different loss functions, we focus on unbounded domains in our experiments.
\end{remark}

\section{Backward Stochastic Differential Equations and Semi-Linear PDEs}
\label{app: BSDEs}

Backward stochastic differential equations (BSDEs) have been studied extensively in the last three decades and we refer to \citet{pardoux1998backward}, \citet{pham2009continuous}, \citet{gobet2016monte}, and~\citet{zhang2017backward} for good introductions to the topic. Even though in this paper we only consider linear PDEs as stated in \eqref{eq:PDE} or \eqref{eq: general Feynman-Kac BVP}, BSDEs are typically associated to nonlinear (parabolic) PDEs of the type
\begin{align}
\label{eq: definition general PDE}
    (\partial_t + L) V(x, t) = h(x, t, V(x, t), (\sigma^\top \nabla_x V)(x, t)) 
\end{align}
for $(x, t) \in {\R}^d \times [0, T]$, a nonlinearity $h: \R^d \times [0, T] \times \R \times \R^d \to \R $, and differential operator $L$ defined as in~\eqref{eq:diff_operator}. The terminal value is given by
\begin{equation}
    V(x, T) = g(x),
\end{equation}
for a specified function $g \in C(\R^d, \R)$. 

BSDEs were first introduced in \citet{bismut1973conjugate} and their systematic study began with \citet{pardoux1990adapted}. Loosely speaking, they can be understood as nonlinear extensions of the Feynman-Kac formula (see \citet{pardoux1998backward} and \Cref{app: feynman-kac}), relating the nonlinear PDE in 
\eqref{eq: definition general PDE} to the stochastic process $X_s$ defined by
\begin{equation}
\label{eq: fordward SDE}
    \mathrm dX_s = b(X_s, s) \, \mathrm ds + \sigma(X_s, s) \, \mathrm d W_s, \quad X_0 = x_0.
\end{equation}
The key idea is then to define the processes
\begin{equation}
\label{eq: def Y Z}
    Y_s = V(X_s, s), \qquad Z_s = (\sigma^\top \nabla_x V)(X_s, s),
\end{equation}
as representations of the PDE solution and its gradient, respectively,
and apply It\^{o}'s lemma to obtain
\begin{equation}
\label{eq: BSDE}
    \mathrm d Y_s  = h(X_s, s, Y_s, Z_s) \, \mathrm ds +  Z_s \cdot \mathrm dW_s, 
\end{equation}
with terminal condition $Y_T  = g(X_T)$.
Noting that the processes $Y$ and $Z$ are adapted\footnote{Intuitively, this means that the processes $Y$ and $Z$ must not depend on future values of the Brownian motion $W$.} to the filtration generated by the Brownian motion $W$, they should indeed be understood as backward processes and not be confused with time-reversed processes. A convenient interpretation of the relations in \eqref{eq: def Y Z} is that solving for the processes $Y$ and $Z$
under the constraint \eqref{eq: BSDE} corresponds to determining the solution of the PDE in \eqref{eq: definition general PDE} (and its gradient) along a random grid which is provided by the stochastic process $X$ defined in \eqref{eq: fordward SDE}.

Let us note that under suitable assumptions on the coefficients $b, \sigma, h$, and $g$ one can prove existence and uniqueness of a solution to the BSDE system as defined in \eqref{eq: fordward SDE} and \eqref{eq: BSDE}, see for instance Theorem 4.3.1 in \citet{zhang2017backward}.

We further note that the standard BSDE system can be generalized to
\begin{align*}
    \mathrm dX_s^v &\! = \! \left(b(X_s^v, s) + v(X_s^v, s)\right) \mathrm ds + \sigma(X_s^v, s) \mathrm d W_s, \\
      \mathrm dY_s^v & \! = \! (h(X_s^v, s, Y_s^v, Z_s^v)  + v(X_s^v, s)\! \cdot \! Z_s^v) \mathrm ds + Z_s^v \!\cdot \mathrm d W_s,
\end{align*}
with 
\begin{equation}
    X_0^v = x, \qquad Y_T^v = g(X_T^v).
\end{equation}
In the above, $v \in C( \R^d \times [0, T], \R^d)$ is a suitable control vector field that can be understood as pushing the forward trajectories into desired regions of the state space, noting that the relations 
\begin{equation}
    Y_s^v = V(X_s^v, s), \qquad Z_s^v = (\sigma^\top \nabla_x V)(X_s^v, s),
\end{equation}
with $V \in C^{2, 1}(\R^d \times [0, T], \R)$ being the solution to the parabolic PDE in \eqref{eq: definition general PDE}, hold true independent of the choice of $v$ \cite{hartmann2019variational}. 

\section{Further Computational Details}
\label{app: computational details}

\begin{algorithm}[t!]
\begin{algorithmic}
\INPUT Neural network $\Phi$ with initial parameters $\theta^{(0)}$, optimizer method $\operatorname{step}$ for updating the parameters, maximum number of steps $M$, batch size $K$, step-size $\Delta t$
\OUTPUT parameters $\theta^{(M)}$

\FOR{$m\gets 0,\dots, M-1$}
\STATE $ (\xi^{(k)},\tau^{(k)},W^{(k)})_{k=1}^K \gets \text{sample from } (\xi,\tau,W)^{\otimes K}$ 
\STATE $(\widehat{X}^{(k)})_{k=1}^K \gets \text{simulate using the EM scheme in~\eqref{eq: Euler scheme}}$
\STATE $\mathcal{L} \gets \text{pick } \mathcal{L} \in \{ \mathcal{L}_\mathrm{FK}, \mathcal{L}_\mathrm{BSDE}\}$
\STATE $\widehat{\mathcal{L}}^{(K)}(\Phi_{\theta^{(m)}}) \gets \text{compute estimator loss as in~\eqref{eq: loss estimators}}$
\STATE $\nabla_\theta\widehat{\mathcal{L}}^{(K)}(\Phi_{\theta^{(m)}}) \gets \operatorname{autodiff}(\widehat{\mathcal{L}}^{(K)}(\Phi_{\theta^{(m)}}))$
\STATE $\theta^{(m + 1)} \gets \operatorname{step}\left( \theta^{(m)}, \nabla_\theta \widehat{\mathcal{L}}^{(K)}(\Phi_{\theta^{(m)}})\right)$ 
\ENDFOR
\end{algorithmic}
\caption{Solving the PDE in~\eqref{eq:PDE} via deep learning}
\label{alg: algorithm}
\end{algorithm}

For convenience, we first summarize our considered method in~\Cref{alg: algorithm}. In numerical simulations we need to discretize the stochastic process $X$ as defined in \eqref{eq:SDE} on a time grid $\tau = t_1 < \dots < t_J $. A practical way to do so is based on the Euler-Maruyama (EM) scheme
\begin{align}
\label{eq: Euler scheme}
     \widehat{X}_{j+1} = \widehat{X}_j &+ b(\widehat{X}_j, t_j) \Delta t + \sigma(\widehat{X}, t_j) \sqrt{\Delta t}\, \zeta_{j+1},
\end{align}
where $\Delta t \coloneqq t_{j+1} - t_j$ is the step-size and 
\begin{equation}
    \zeta_{j+1} \coloneqq \frac{W_{t_{j+1}} - W_{t_j}}{\sqrt{\Delta t}} \sim \mathcal{N}(0,\operatorname{Id})
\end{equation}
is a standard normally distributed random variable. It can be shown that $\widehat{X}_{j}$ convergences to $X_{j \Delta t}$ in an appropriate sense \cite{kloeden1992stochastic}.
This readily leads to discrete versions of the quantities $\Delta_u$ and $S_u$ as defined in~\eqref{eq: estimator discr}.
Note that the discrete process $\widehat{X}$ is initialized at the random value $\widehat{X}_1 = \xi$ and $J$ is chosen according to the randomly drawn initial time $\tau$. More precisely, we set $J = \lceil   (T-\tau) / \Delta t  \rceil$ and use a smaller final step-size in order to arrive at the terminal time $T$. An alternative strategy would be to fix $J$ for all realizations and change the step-size $\Delta t = \frac{T - \tau}{J}$ depending on the value of $\tau$. We display the memory requirements of our methods as a function of the step-size $\Delta t$ in \Cref{fig: gpu mem}. 

\begin{figure}[t!]
  \centering
  \includegraphics[width=\linewidth]{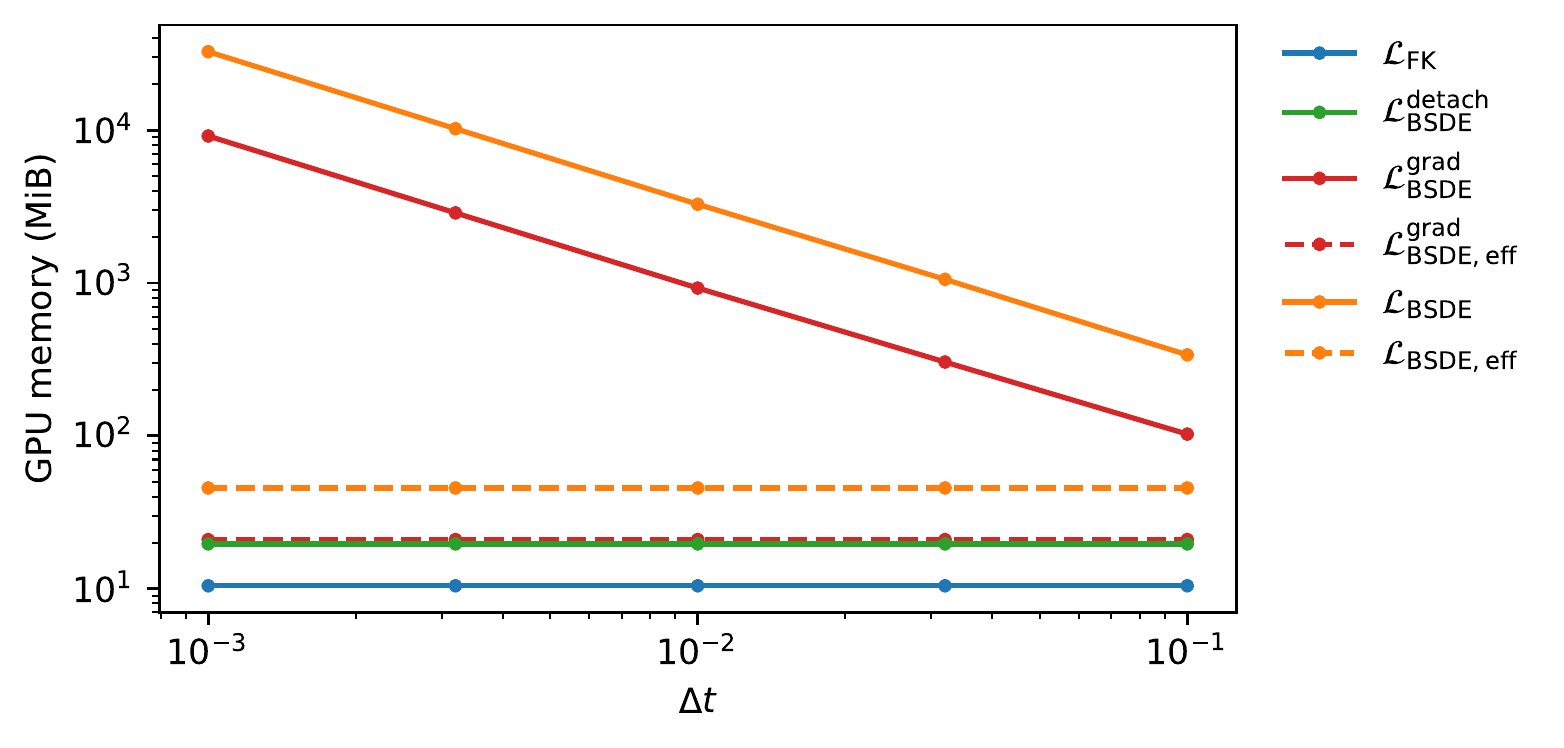}
  \vspace{-0.7cm}
  \caption{GPU memory requirements for a gradient step with batch size $K = 1024$ and different step-sizes $\Delta t$ for the SDE discretization when solving the heat equation. If we do not consider the efficient versions, the memory usage for the losses $\mathcal{L}^{\mathrm{grad}}_{\mathrm{BSDE}}$ and $\mathcal{L}_{\mathrm{BSDE}}$ depends approximately linearly on $\Delta t$.}
  \label{fig: gpu mem}
\end{figure}

In~\Cref{table:hp} we summarize the hyperparameters for our experiments. We specify the amount of steps (Schedule A) or time (Schedule B) that we train and the percentile, i.e., milestone, where we decrease the learning rate $\lambda$ and the step-size $\Delta t$. In order to have comparable results, we set a GPU memory limit of $8$ GiB during training and always use batch sizes $K \in \{2^4, 2^7, 2^{10}, 2^{13}, 2^{16}, 2^{19}\}$.

We employ a Multilevel neural network architecture, which has shown to be advantageous over standard feed-forward architectures. More precisely, we use an architecture with $L=3$ levels, amplifying factor $q=5$ for the HJB equation and $q=3$ for the other PDEs, intermediate residual connections ($\chi=1$) and without normalization layer, as defined in~\citet{berner2020numerically}. Note that in case of the losses $\mathcal{L}_{\mathrm{BSDE}}^{\mathrm{grad}}$ and $\mathcal{L}_{\mathrm{BSDE,\,eff}}^{\mathrm{grad}}$ we use the same architecture with output dimension $d$ for the neural network representing the function $r$ in~\eqref{eq: BSDE loss extra model}. As the activation function needs to be twice differentiable for certain losses, we replace the ReLU by the SiLU (also known as swish) activation function~\cite{hendrycks2016gaussian}.

Finally,~\Cref{table:hp} also specifies the number of samples used to approximate the MSE metrics and the number of batches used to estimate the loss and gradient variances. For the latter, we compute the value and derivative of $\widehat{\mathcal{L}}^{(K)}$ (w.r.t.\@ the parameters $\theta$ of $\Phi_\theta$) for $30$ batches consisting of $K$ independent samples of $(\xi,\tau,W)$ by performing forward and backward passes without updating the parameters $\theta$. To compute the MSE we evaluate $\widehat{\mathcal{L}}^{(N)}_{\mathrm{Eval}}$
with $N=10\cdot 2^{17}$ i.i.d.\@ samples drawn from the distribution of $(\xi,\tau)$ -- independently of the training data and independently for each evaluation. In this sense, we always evaluate our model w.r.t.\@ to the real solution $V$ on unseen data. For comparing the gradients we replace $(u(\xi,\tau)-V(\xi,\tau))^2$ by $\|\nabla_x u(\xi,\tau)-\nabla_x V(\xi,\tau)\|^2$ or $\|r(\xi,\tau)-\nabla_x V(\xi,\tau)\|^2$ in case of the methods $\mathcal{L}_{\mathrm{BSDE}}^{\mathrm{grad}}$ and $\mathcal{L}_{\mathrm{BSDE,\,eff}}^{\mathrm{grad}}$. When no closed-form solution for $V$ is available, as for the Black--Scholes model in \Cref{sec: black-scholes}, we use the version in~\eqref{eq:mse_mc} with another $2^{16}$ independent samples to estimate the inner expectation.

\begin{table}[t!]
    \centering
    \caption{Training and evaluation setup}
    \begin{tabular}{ll}
\toprule
\textbf{Schedule A: Step limit} & \\
steps $M$ & $3 \cdot 10^{4}$ \\
milestone & 0.9 \\
\midrule
\textbf{Schedule B: Time limit} & \\
time & $24h$ \\
milestone & 0.5 \\
\midrule
\textbf{Training} & \\
learning rate $\lambda$ & $[5\cdot 10^{-4}, 5\cdot 10^{-6}]$ \\
step-size $\Delta t$ &  $[10^{-2}, 10^{-3}]$ \\
optimizer & Adam \\
batch size $K$ & $\{2^4, 2^7, 2^{10}, 2^{13}, 2^{16}, 2^{19}\}$ \\
GPU memory constraint & $8$ GiB \\
\midrule
\textbf{Network} & \\
architecture & Multilevel \\
$(L,q,\chi)$ &         $\{(3,3,1),(3,5,1)\}$\\
activation function & SiLU \\
\midrule 
\textbf{Evaluation} & \\
samples $N$  &        $10 \cdot 2^{17}$     \\
batches (for variances) & 30 \\
\bottomrule
\end{tabular}
    \label{table:hp}
\end{table}

\section{Further Numerical Experiments}

Figures \ref{fig: heat plot} to \ref{fig: hjb scaling} show the results of additional numerical experiments. The corresponding captions describe the respective settings.

\begin{figure}[t!]
  \centering
  \includegraphics[width=1.0\linewidth]{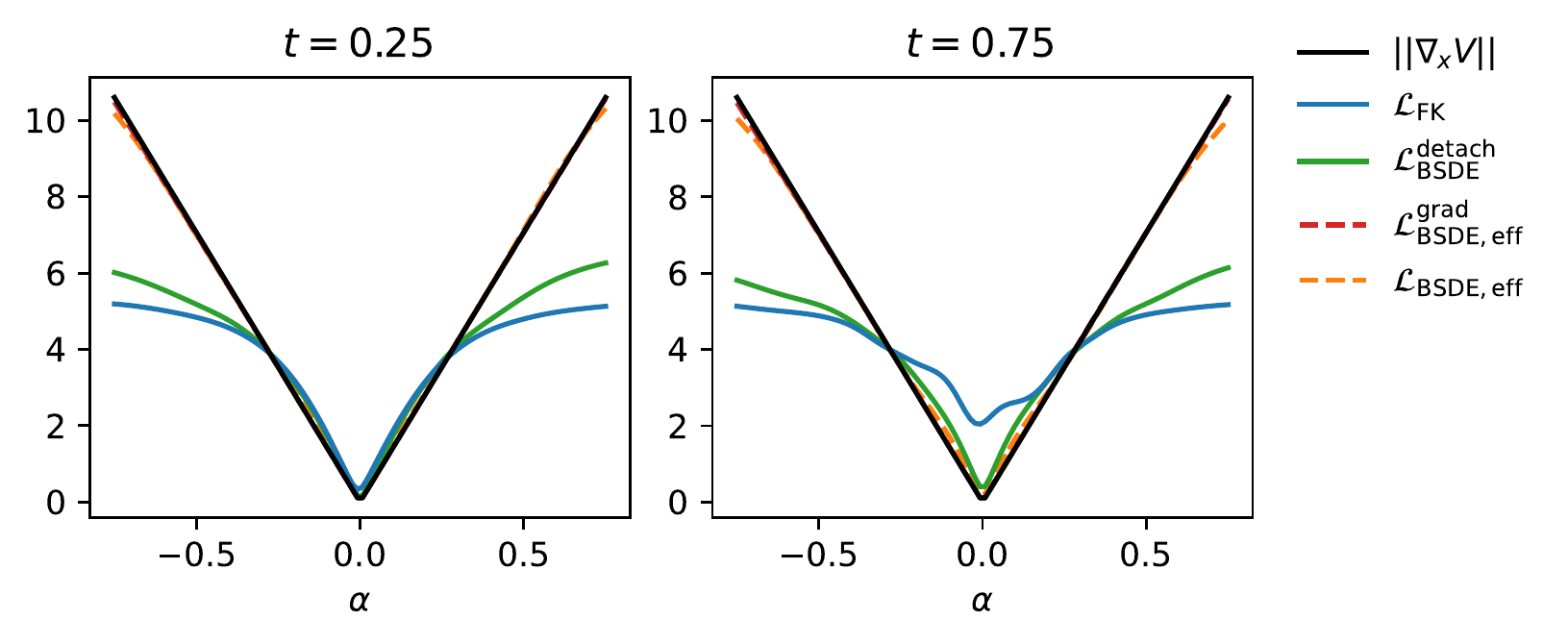}
  \vspace{-0.7cm}
  \caption{Approximation of the norm of the gradient $\|\nabla_x V\|$ of the solution $V$ to the heat equation in Section~\ref{sec: heat equation} for $t \in \{0.25,0.75\}$ and $x=(\alpha,\dots,\alpha)^\top$ after training for $30k$ steps with batch size $K = 1024$. As expected, the stochastic integral, which is present in all losses except $\mathcal{L}_{\mathrm{FK}}$, induces better approximation of the gradient. Furthermore, explicitly modelling the gradient as in
  $\mathcal{L}^{\mathrm{grad}}_{\mathrm{BSDE,\,eff}}$ or back-propagating the derivative of the neural network as in $\mathcal{L}_{\mathrm{BSDE,\,eff}}$ also improves the approximation outside of the sampling interval for $x$, i.e. outside of $[-0.5,0.5]^{50}$.}
  \label{fig: heat plot}
\end{figure}

\begin{figure}[t!]
  \centering
  \includegraphics[width=1.0\linewidth]{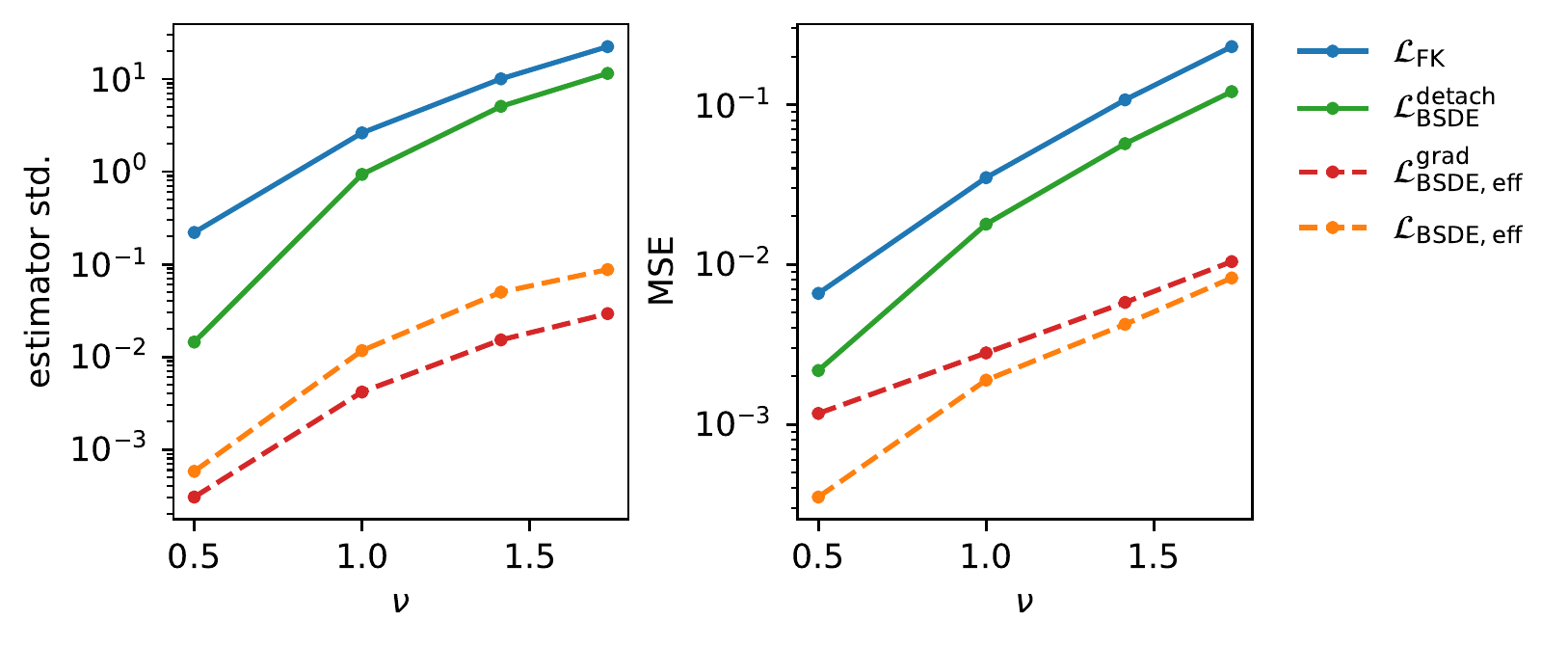}
  \vspace{-0.7cm}
  \caption{Estimator standard deviation and MSE after $30k$ gradient steps for the heat equation in Section~\ref{sec: heat equation} with batch size $1024$ and varying diffusivities $\nu$. While solving the PDE becomes more challenging for higher values of $\nu$, our proposed methods consistently outperform the baseline.}
  \label{fig: diff scaling}
\end{figure}

\begin{figure}[t!]
  \centering
  \includegraphics[width=1.0\linewidth]{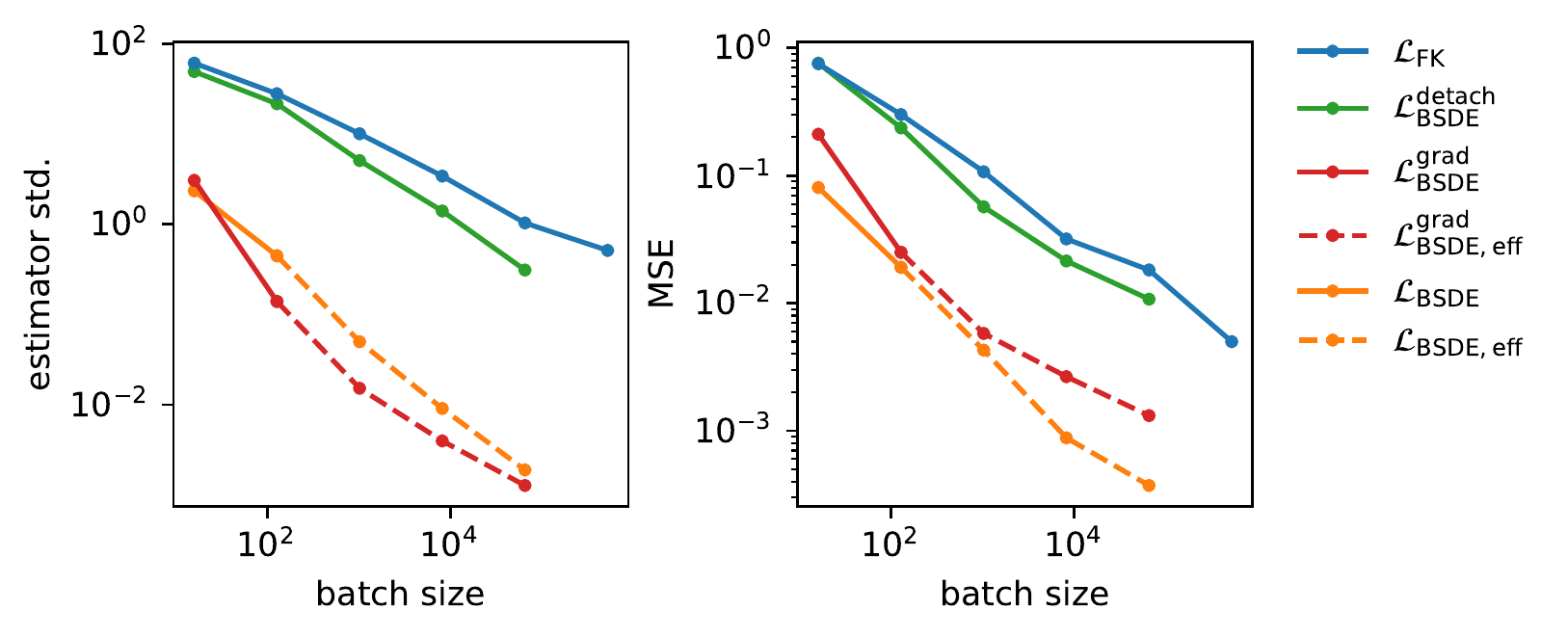}
  \vspace{-0.7cm}
  \caption{We observe performance differences similar to \Cref{fig: loss std first page} when solving the heat equation from~\Cref{sec: heat equation} with a higher diffusivity of $\nu=\sqrt{3}$.}
  \label{fig: heat diff scaling}
\end{figure}

\begin{figure}[t!]
  \centering
  \includegraphics[width=1.0\linewidth]{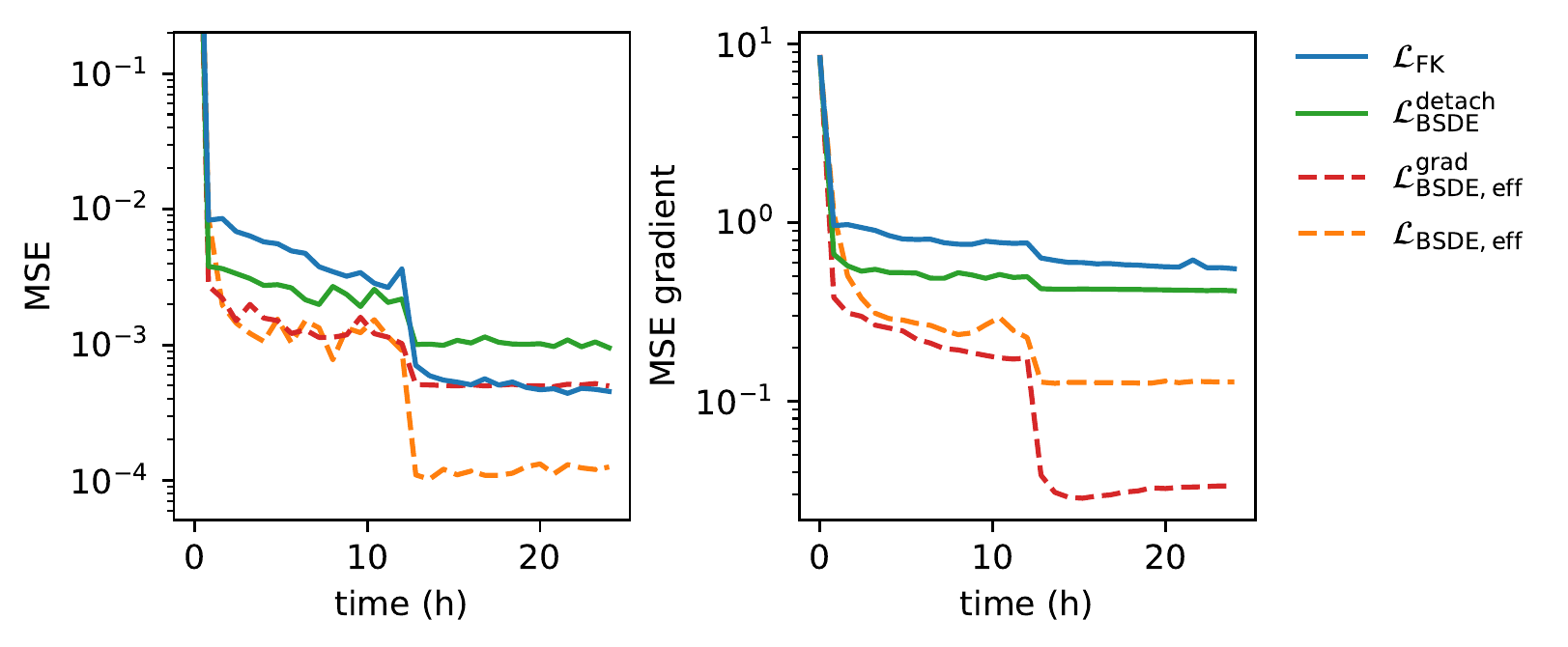}
  \vspace{-0.7cm}
  \caption{MSE of different losses as a function of the training time when solving the heat equation from \Cref{sec: heat equation} with batch size $K = 8192$. Although the time needed for computing and back-propagating the stochastic integral leads to significantly less gradient steps and thus samples of $(\xi,\tau,W)$ (see \Cref{fig: heat time scaling}), the loss $\mathcal{L}_{\mathrm{BSDE,\,eff}}$ still outperforms the loss $\mathcal{L}_{\mathrm{FK}}$.}
  \label{fig: heat time}
\end{figure}

\begin{figure}[t!]
  \centering
  \includegraphics[width=1.0\linewidth]{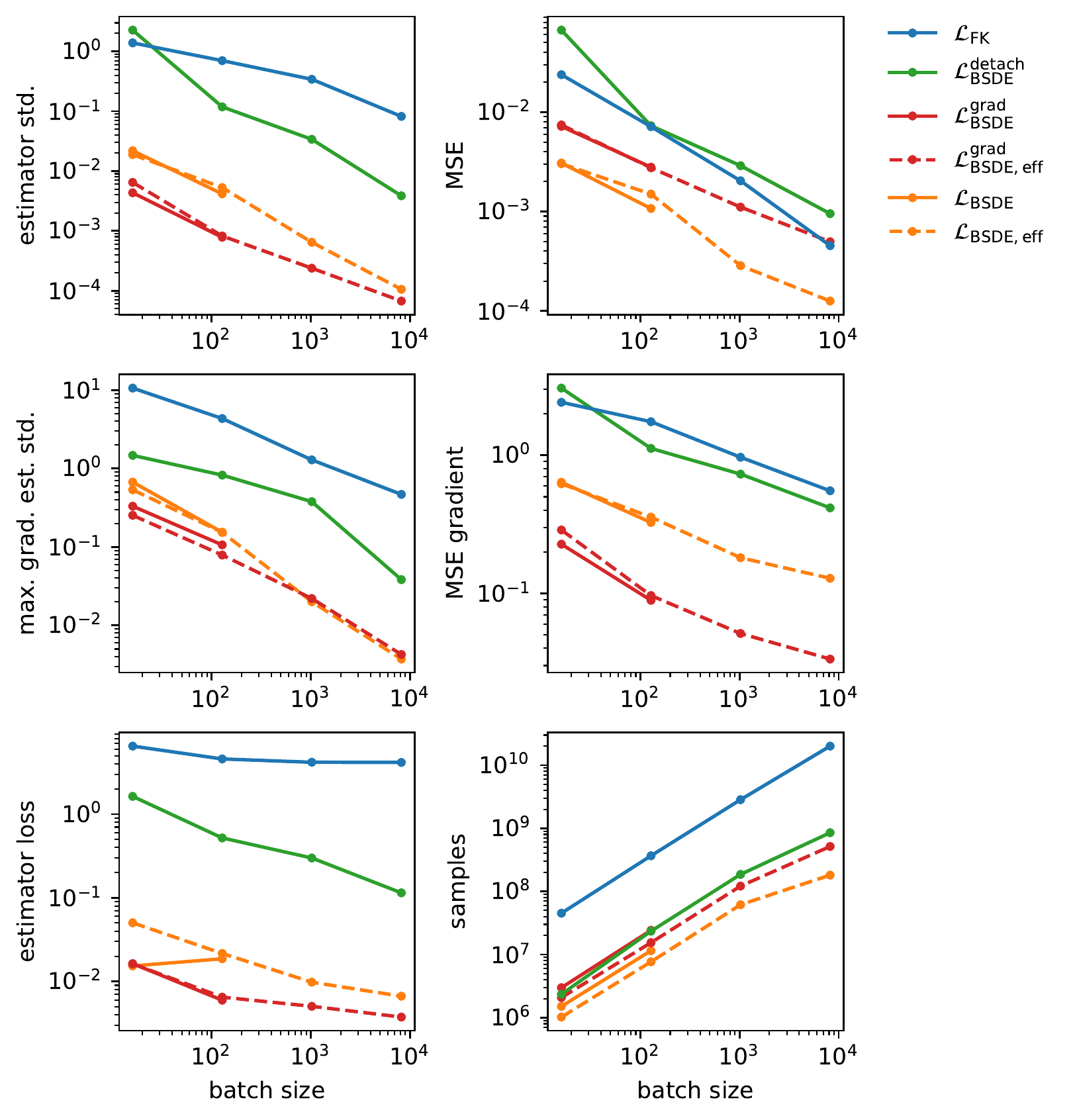}
  \vspace{-0.7cm}
  \caption{Scaling of estimator loss, standard deviation of the (gradient) estimator, and MSE when training for $24h$. We observe similar effects as in Figures \ref{fig: loss std first page} and \ref{fig: grad std heat}, even though the number of samples (or, proportionally, the number of steps) is significantly higher for the loss $\mathcal{L}_{\mathrm{FK}}$. For batch sizes higher than $K = 8192$ a larger time budget is necessary.}
  \label{fig: heat time scaling}
\end{figure}

\begin{figure}[t!]
\centering
  \centering
  \includegraphics[width=1.0\linewidth]{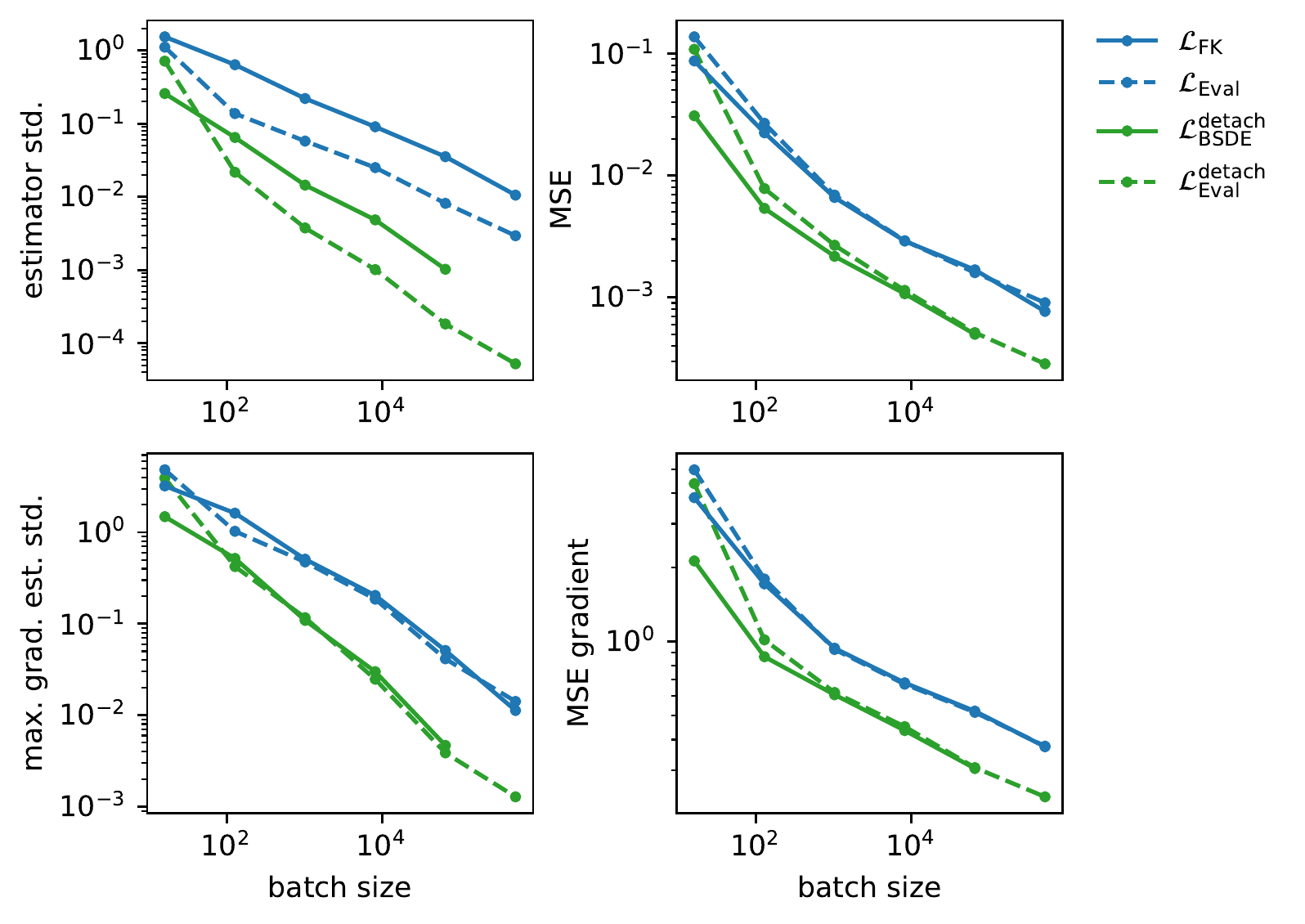}
  \vspace{-0.7cm}
  \caption{Performance and standard deviation of the losses $\mathcal{L}_{\mathrm{Eval}}$ and $\mathcal{L}_{\mathrm{Eval}}^\mathrm{detach}$ (detaching $S_u$ in \eqref{eq:eval_loss FK control variate} from the computational graph as in~\eqref{eq:detach}) for the heat equation from \Cref{sec: heat equation} compared to their natural counterparts. Note that they perform significantly worse for small batch sizes, but similarly for large batch sizes -- for the detached version one can allow for larger batch sizes, given a fixed memory budget. Note via a comparison to \Cref{fig: loss std first page,fig: grad std heat} that the efficient versions of the considered losses are still much better.}
  \label{fig: heat repeat all}
\end{figure}

\begin{figure}[t!]
  \centering
  \includegraphics[width=1.0\linewidth]{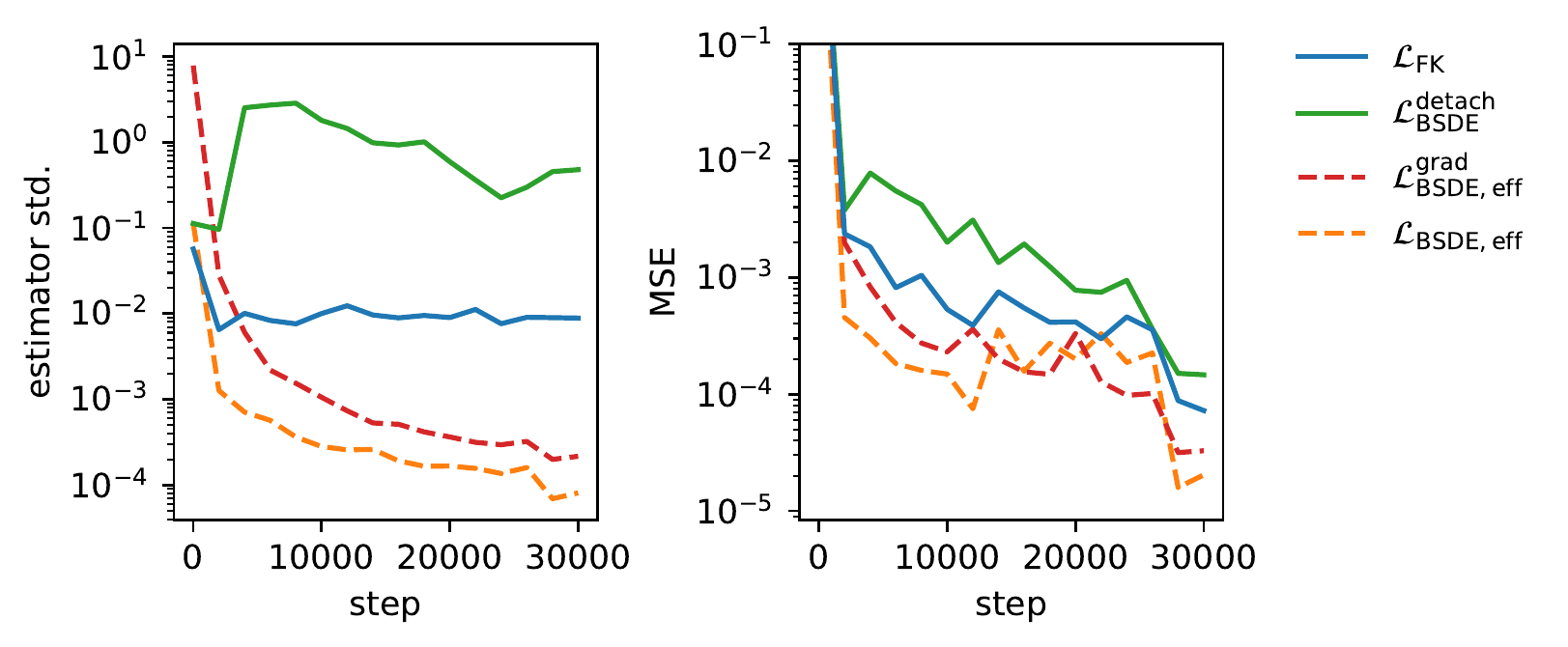}
  \vspace{-0.7cm}
  \caption{Estimator standard deviation
  and MSE as a function of the gradient steps when solving the Black--Scholes equation from \Cref{sec: black-scholes} with batch size $K = 8192$. In case of the loss $\mathcal{L}^{\mathrm{detach}}_{\mathrm{BSDE}}$, a bad initial approximation actually leads to a variance-increasing effect of the stochastic integral. As the error in the gradient is not back-propagated to the network parameters, this deteriorates the convergence.
  }
  \label{fig: bs detach}
\end{figure}

\begin{figure}[t!]
\centering
  \centering
  \includegraphics[width=1.0\linewidth]{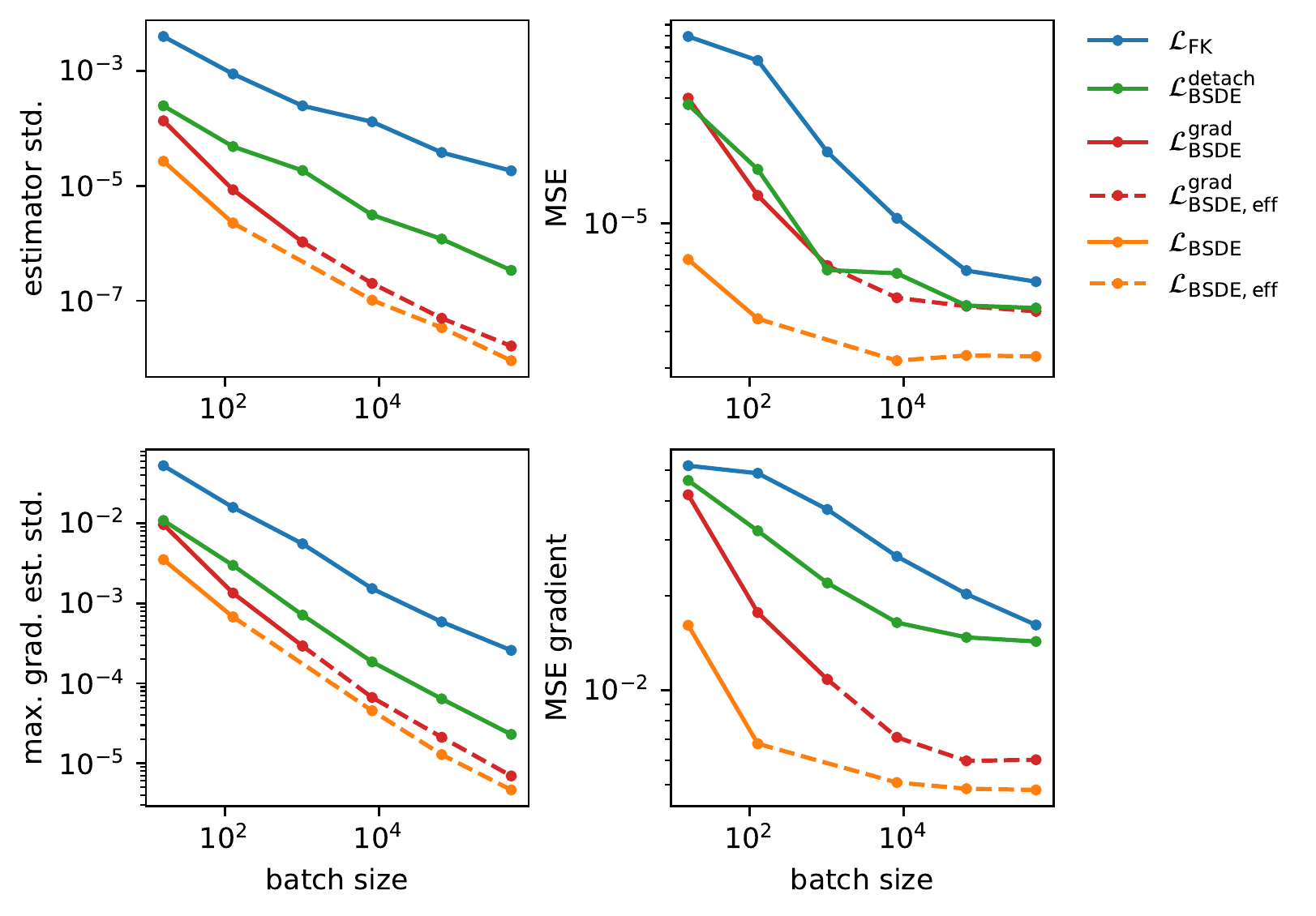}
  \vspace{-0.7cm}
  \caption{Similar to Figures \ref{fig: loss std first page},~\ref{fig: grad std heat}, and~\ref{fig: scaling bs}, our theoretical and empirical findings also hold in case of the HJB equation from~\Cref{sec: hjb}.}
  \label{fig: hjb scaling}
\end{figure}

\clearpage
\newpage 

\section{Additional Material}

The following Lemma details the relation of the HJB equation from \Cref{sec: hjb} and a linear PDE of Kolmogorov type.

\begin{lemma}[Linearization of HJB equation]
\label{lem: Linearization of HJB equation}
Let $\widetilde{V} \in C^{2, 1}(\R^d \times [0, T], \R)$ solve the Hamilton--Jacobi--Bellman equation in \eqref{eq: HJB PDE}, then $V = \exp(-\widetilde{V})$ fulfills the linear PDE as stated in \Cref{thm: general Feynman-Kac} with $k=0$ and $g = \exp(-\widetilde{g})$, i.e.
\begin{subequations}
\begin{align*}
    (\partial_t + L - f(x, t)) V(x, t) &= 0, & (x, t) \in \R^d \times [0, T), \\
    V(x, T) &= g(x), & x \in \R^d.
\end{align*}
\end{subequations}
Subsequently, we get the representation
\begin{equation}
    \widetilde{V}(x, t) = -\log \E\left[e^{-\mathcal{W}(X)}\Big| X_t = x\right]
\end{equation}
with $\mathcal{W}(X) = \int_t^T f(X_s, s)\mathrm ds + \widetilde{g}(X_T)$.
\end{lemma}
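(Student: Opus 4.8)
The plan is to verify the PDE identity by a direct chain-rule computation and then invoke the Feynman-Kac theorem (\Cref{thm: general Feynman-Kac}) for the representation formula. First I would write $V = e^{-\widetilde V}$ and compute the relevant derivatives. The temporal derivative gives $\partial_t V = -(\partial_t \widetilde V)\, V$, and the first spatial derivatives give $\partial_{x_i} V = -(\partial_{x_i}\widetilde V)\, V$. Differentiating once more yields the key second-order term
\begin{equation*}
\partial_{x_i}\partial_{x_j} V = \left(-\partial_{x_i}\partial_{x_j}\widetilde V + (\partial_{x_i}\widetilde V)(\partial_{x_j}\widetilde V)\right) V,
\end{equation*}
whose quadratic correction is exactly what will reproduce the nonlinear term $\|\sigma^\top \nabla_x \widetilde V\|^2$ appearing in the HJB equation.

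Next I would substitute these expressions into the operator $L$ from \eqref{eq:diff_operator}. Collecting terms and using the contraction $\sum_{i,j}(\sigma\sigma^\top)_{ij}(\partial_{x_i}\widetilde V)(\partial_{x_j}\widetilde V) = \|\sigma^\top \nabla_x \widetilde V\|^2$, one finds
\begin{equation*}
(\partial_t + L - f)V = -V\left(\partial_t \widetilde V + L\widetilde V + f - \tfrac{1}{2}\|\sigma^\top \nabla_x \widetilde V\|^2\right).
\end{equation*}
The bracketed expression vanishes precisely by the HJB equation \eqref{eq: HJB PDE}, so that $(\partial_t + L - f)V = 0$. The terminal condition is immediate, since $V(x,T) = e^{-\widetilde V(x,T)} = e^{-\widetilde g(x)} = g(x)$.

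For the representation formula, I would apply \Cref{thm: general Feynman-Kac} with $k = 0$ to the linear PDE just established, obtaining $V(x,t) = \E[e^{-\int_t^T f(X_s,s)\,\mathrm ds} g(X_T) \mid X_t = x]$. Inserting $g = e^{-\widetilde g}$ merges the two exponentials into $e^{-\mathcal{W}(X)}$, and taking $-\log$ of both sides (using $\widetilde V = -\log V$) then yields the claimed identity. The chain-rule identity itself is routine, so the only genuinely delicate point is checking that the hypotheses of \Cref{thm: general Feynman-Kac} actually hold here: one must confirm that $V = e^{-\widetilde V}$ and $g = e^{-\widetilde g}$ satisfy the required regularity and at-most-polynomial-growth conditions, and that the running cost $f$ meets the nonnegativity/integrability assumptions under which the stochastic representation is valid. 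I would discharge these under the standing assumptions on $\widetilde V$, $\widetilde g$, and $f$, which is where the substantive (rather than computational) work lies.
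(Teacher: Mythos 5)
Your proposal is correct and follows essentially the same route as the paper: a direct chain-rule computation of $(\partial_t + L - f)e^{-\widetilde V}$, the contraction $\sum_{i,j}(\sigma\sigma^\top)_{ij}(\partial_{x_i}\widetilde V)(\partial_{x_j}\widetilde V) = \|\sigma^\top\nabla_x\widetilde V\|^2$, and the observation that the resulting bracket vanishes exactly by the HJB equation, followed by an appeal to \Cref{thm: general Feynman-Kac} for the representation. Your closing remark about verifying the growth and regularity hypotheses of the Feynman-Kac theorem is a reasonable extra care point that the paper's proof leaves implicit.
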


\begin{proof}
We consider the transformation $V = \exp(-\widetilde{V})$ and for notational convenience omit the space and time dependencies of $\widetilde{V}, f, b$, and $\sigma$. We can compute
\begin{align*}
    L e^{-\widetilde{V}} &= -b \cdot  e^{-\widetilde{V}}\nabla_x \widetilde{V} - \frac{1}{2} \sum_{i, j=1}^d (\sigma \sigma^\top)_{ij}\partial_{x_i} \big(\partial_{x_j} e^{-\widetilde{V}} \widetilde{V}  \big) \\ 
    \begin{split}
    &= - e^{-\widetilde{V}} \Bigg(b \cdot \nabla_x \widetilde{V}  + \frac{1}{2} \sum_{i, j=1}^d (\sigma \sigma^\top)_{ij}\partial_{x_i}  \partial_{x_j} \widetilde{V} \\
    &\qquad\qquad\qquad\qquad \   - \frac{1}{2} \sum_{i, j=1}^d (\sigma \sigma^\top)_{ij}\partial_{x_i} \widetilde{V} \partial_{x_j} \widetilde{V}\Bigg)\end{split} \\ 
    \begin{split}
    &= - e^{-\widetilde{V}} \Bigg(b \cdot \nabla_x \widetilde{V}  + \frac{1}{2} \sum_{i, j=1}^d (\sigma \sigma^\top)_{ij}\partial_{x_i}  \partial_{x_j} \widetilde{V} \\
    &\qquad\qquad\qquad\qquad \  -\frac{1}{2}  \sum_{i, j, k=1}^d \sigma_{ik}\sigma_{jk}\partial_{x_i} \widetilde{V} \partial_{x_j} \widetilde{V} \Bigg) \end{split} \\ 
    \begin{split}
    &= - e^{-\widetilde{V}} \Bigg(b \cdot \nabla_x \widetilde{V}  + \frac{1}{2} \sum_{i, j=1}^d (\sigma \sigma^\top)_{ij}\partial_{x_i}  \partial_{x_j} \widetilde{V} \\
    &\qquad\qquad\qquad\qquad \  - \frac{1}{2} \sum_{k=1}^d \left( \sum_{i=1}^d \sigma_{ik}\partial_{x_i} \widetilde{V} \right)^2 \Bigg) \end{split}\\ 
    \begin{split}
    &= -e^{-\widetilde{V}}\Big(L\widetilde{V} - \frac{1}{2}\|\sigma^\top \nabla_x \widetilde{V}\|^2\Big).\end{split}
\end{align*}
The PDE in \eqref{eq: general Feynman-Kac BVP} therefore becomes
\begin{subequations}
\begin{align}
    0 &= (\partial_t + L - f) e^{-\widetilde{V}} \\
    \begin{split}
    &= -e^{-\widetilde{V}}\Big((\partial_t + L) \widetilde{V} + f -\frac{1}{2}\|\sigma^\top \nabla_x \widetilde{V}\|^2 \Big),\end{split}
\end{align}
\end{subequations}
which is equivalent to the HJB equation in \eqref{eq: HJB PDE}.
\end{proof}

For a discussion on the application of \Cref{lem: Linearization of HJB equation} in the context of importance sampling of stochastic processes that are relevant in computational statistical physics and molecular dynamics we refer to \citet{hartmann2017variational}.

\end{document}